\newcommand{\R}{\mathbb{R}}
\newcommand{\Prob}{\mathbb{P}}
\newcommand{\Esp}{\mathbb{E}}
\newcommand{\N}{\mathbb{N}}
\newcommand{\Q}{\mathbb{Q}}
\newcommand{\K}{\mathbb{K}}
\newcommand{\C}{\mathbb{C}}
\newcommand{\vs}[1]{\ensuremath{\mathbf{\boldsymbol{#1}}}}
\renewcommand{\v}[1]{\ensuremath{\mathbf{#1}}}
\newcommand{\e}[1]{\v{\ensuremath{\boldsymbol{\mathscr{#1}}}}}
\newcommand{\comment}[1]{}
\newtheorem*{lemma*}{Lemma}
\newtheorem*{theorem*}{Theorem}
\newtheorem{condition}{Condition}
\DeclareMathOperator*{\argmax}{arg\,max}
\title[Learning Negative Mixture]{Learning Negative Mixture Models by Tensor Decompositions}
\begin{document}

\maketitle

\begin{abstract}
  This work considers the problem of estimating the parameters of
  \emph{negative mixture models}, i.e. mixture models that possibly
  involve negative weights. The contributions of this paper are as
  follows. (i) We show that every rational probability distributions
  on strings, a representation which occurs naturally in spectral
  learning, can be computed by a negative mixture of at most two
  probabilistic automata (or HMMs).  (ii) We propose a method to
  estimate the parameters of negative mixture models having a specific
  tensor structure in their low order observable moments. Building
  upon a recent paper on tensor decompositions for learning latent variable models, 
  we extend this work to the broader setting
  of tensors having a symmetric decomposition with positive \emph{and
    negative} weights. We introduce a generalization of the
  \emph{tensor power method} for complex valued tensors, and establish
  theoretical convergence guarantees. (iii) We show how our approach
  applies to \emph{negative Gaussian mixture models}, for which we
  provide some experiments.
\end{abstract}

\begin{keywords}
 Spectral learning, Tensor decomposition, Mixture models, Rational series.
\end{keywords}

\section{Introduction}

\emph{Mixture models}, such as Gaussian mixture model, are widely used
in statistics and machine learning~\cite{McLachlanPeel00}. Given a parametric family of
probability distributions ${\cal D}$, a mixture is defined by the number of components $k\geq 1$,
probabilities $p_1, \ldots p_k\in [0,1]$ satisfying $p_1+\ldots +p_k=1$
and distributions $F_1, \ldots, F_k$ from ${\cal D}$. 
Given a sample drawn from a target mixture model, the
parameters are usually fit by using the EM algorithm.

Let $f_1, \ldots, f_k$ be the \textsc{pdf} associated with $F_1, \ldots,
F_k$.  It may happen that the function $p_1f_1 + \ldots + p_kf_k$
remains positive even if some of the weights become negative, and
still defines a probability density.  We call \emph{negative
  mixtures} such distributions. Only a few papers investigate negative mixtures~\cite{DBLP:conf/mldm/ZhangZ05,conf/wpnc/MullerADNP12,Jiang1999227,Jevremovic1991}. It can easily be seen that every
negative mixture can be written as a negative mixture
 of 2 positive one.  
We show that negative mixtures naturally occur in spectral learning.

Let $\Sigma$ be a finite alphabet and let $\Sigma^*$ denote the set of
strings built over $\Sigma$. A probability distribution $p$ defined on
$\Sigma^*$ is said to be \emph{rational} if it admits a \emph{linear
  representation}, i.e. if there exists an integer $n\geq 1$, vectors
$\vs{\iota},\vs{\tau}\in \R^n$ and matrices $\v{M}_x\in \R^{n\times n}$ associated with
each letter $x\in \Sigma$ such that $p(u_1\ldots
u_l)=\vs{\iota}^{\top}\v{M}_{u_1}\ldots \v{M}_{u_l}\vs{\tau}$~\cite{DBLP:journals/fuin/DenisE08}. It can easily be shown that any
probability distribution defined by a hidden Markov model (HMM), or equivalently, by a probabilistic
automaton, is rational. However, there exist rational probability
distributions that cannot be computed by a HMM. The spectral learning
algorithms used to infer probability distributions from a sample of
strings generally output rational probability
distributions.
Positive and negative mixtures of rational distributions are
rational. Positive mixtures of distributions computed by HMMs can be
computed by HMMs. In this paper, we show that every rational
distribution $p$ is a negative mixture $(1+w)p_{H_1}-wp_{H_2}$ of two
distributions computed by HMMs. So, negative mixtures occur
naturally. How the parameters of the target model can be fit?

In a recent paper, it has been shown that the parameters of a number
of latent variable models, including Gaussian mixture models and
HMMs, can easily be estimated from tensor decomposition of
low-order moments of the data~\cite{HsuKakade}. Typically, if $\v{x}$ is drawn according to
the (positive) mixture $p_1{\cal N}(\vs{\mu}_1,\sigma) + \ldots + p_k{\cal
  N}(\vs{\mu}_k,\sigma)$ of spherical Gaussians whose centers $\vs{\mu}_i$ are
linearly independent, the tensors
$\v{M}_2=\sum_{i=1}^kp_i\vs{\mu}_i\otimes\vs{\mu}_i$ and
$\e{M}_3=\sum_{i=1}^kp_i\vs{\mu}_i\otimes\vs{\mu}_i\otimes\vs{\mu}_i$ can be expressed as
functions of the moments $\Esp[\v{x}\otimes \v{x}]$ and $\Esp[\v{x}\otimes
\v{x}\otimes \v{x}]$~\cite{HsuKakade2}. Using the fact that $\v{M}_2$ is positive semidefinite, $\e{M}_3$ can be reduce to a tensor $\e{\widetilde{M}}_3$ admitting an
\emph{orthonormal} decomposition
$\e{\widetilde{M}}_3=\sum_{i=1}^k\widetilde{p}_i\widetilde{\vs{\mu}}_i\otimes\widetilde{\vs{\mu}}_i\otimes\widetilde{\vs{\mu}}_i$,
where $\widetilde{\vs{\mu}}_i^{\top}\widetilde{\vs{\mu}}_j=\delta_{ij}$, and from which the
original parameters $p_i$ and $\vs{\mu}_i$ can be recovered. Lastly,
it is shown that a decomposition of an orthogonally decomposable tensor can
quickly and robustly be approximated by means of a \emph{tensor power
  method}. These results induce a learning scheme, which appears as a
generalization of the spectral learning approach: from a sample $S$,
compute estimates of  $\v{M}_2$ and $\e{M}_3$, compute  $\e{\widetilde{M}}_3$ and
use the tensor power method to compute an orthogonal decomposition of
$\e{\widetilde{M}}_3$ from which the parameters of the target can be
estimated. 

Each step of the previous scheme strongly use the facts that the
weights $p_i$ are positive and that the $\vs{\mu}_i$ are linearly
independent. We extend it to the case where the weights $p_i$ may be
negative.  The extension is not straightforward since it needs to use
complex square roots of negative real numbers and to introduce non-hermitian
quadratic forms. 

Given the tensors
$\v{M}_2=\sum_{i=1}^kp_i\vs{\mu}_i\otimes\vs{\mu}_i$ and
$\e{M}_3=\sum_{i=1}^kp_i\vs{\mu}_i\otimes\vs{\mu}_i\otimes\vs{\mu}_i$ where the vectors
$\vs{\mu}_i\in \R^n$ are still linearly independent but where the weights
$p_i$ may be negative, we first show how $\e{M}_3$ can be reduced to a
complex-valued pseudo-orthonormal decomposable tensor, i.e. of
the form
$\sum_{i=1}^k\widetilde{p}_i\widetilde{\vs{\mu}}_i\otimes\widetilde{\vs{\mu}}_i\otimes\widetilde{\vs{\mu}}_i$,
where the vectors $\widetilde{\vs{\mu}}_i\in \C^k$ satisfy $\widetilde{\vs{\mu}}_i^{\top}\widetilde{\vs{\mu}}_j=\delta_{ij}$ (for any
vector $\vs{\mu}\in \C^k, \vs{\mu}^{\top}\vs{\mu}\in \C$ since $\vs{\mu}^{\top}$
\textbf{is not} the conjugate transpose of $\vs{\mu}$) and where the
weights $\widetilde{p}_i$ are non-zero complex numbers. Then, we show
how the tensor power method can be adapted to the complex case, with
equivalent convergence guarantees. We deduce from these results a
learning scheme for negative mixtures.  To illustrate this analysis, we experiment our decomposition algorithm
on negative mixtures of spherical Gaussian models and we show how
estimates of a negative mixture target can be inferred from data.  

The paper is organized as follows: preliminaries on rational
probability distributions and tensor decomposition learning methods
are given in Section~\ref{sec:prel}; negative mixtures are introduced
in Section~\ref{seg:negmix} and two introductive examples are
developed; the adaptation of the tensor decomposition learning scheme
to negative mixtures and the main results of the paper are given in
Section~\ref{sec:mainresults}; an application to negative mixtures of
spherical gaussians and some experiments are provided in
Sections~\ref{sec:learning}~and~\ref{sec:expe}; a conclusion ends the paper. 


\section{Preliminaries}\label{sec:prel}




\subsection{Rational probability distributions on strings}

Let $\Sigma$ be a finite alphabet and $\Sigma^*$ denote the set of
all finite strings built over $\Sigma$.  A \emph{series} is a mapping
$r:\Sigma^* \to \R$. A non negative series
$r$ is \emph{convergent} if the sum $\sum_{w\in\Sigma^*}r(w)$ is bounded; its
limit is denoted by $r(\Sigma^*)$. A \emph{probability distribution}
over $\Sigma^*$ is a non-negative series that converges to 1.
A series $r$ over $\Sigma$ is \emph{rational} if there exists an
integer $n\geq 1$, two vectors $\vs{\iota},\vs{\tau}\in \R^n$ and a matrix $\v{M}_x\in
\R^{n\times n}$ for each $x\in \Sigma$ such that for all $u=u_1\ldots u_n\in
\Sigma^*$, $r(u)=\vs{\iota}^T\v{M}_{u_1}\ldots \v{M}_{u_n} \vs{\tau}$~\cite{opac-b1086956}. The
triplet $\langle \vs{\iota},(\v{M}_x)_{x\in \Sigma},\vs{\tau}\rangle$ is called an $n$-dimensional
\emph{linear representation} of $r$. An $n$-states probabilistic
automaton (PA) can be defined as an $n$-dimensional
\emph{linear representation} $\langle \vs{\iota},(\v{M}_x)_{x\in
  \Sigma},\vs{\tau}\rangle$ whose coefficients are all non-negative
and satisfy the following syntactical conditions $$\vs{\iota}^{\top}{\mathbf 1}=1,
\v{I}-\v{M}_{\Sigma}\textrm{ is invertible and }(\v{I}-\v{M}_{\Sigma})^{-1}\vs{\tau}={\mathbf 1}
$$ where ${\mathbf 1}=(1, \ldots 1)^{\top}\in \R^n$ and
$\v{M}_{\Sigma}=\sum_{x\in \Sigma}\v{M}_x$. Hidden Markov Models (HMM) and PAs define the same probability distributions~\cite{DBLP:journals/pr/DupontDE05}. There exist rational probabilistic distributions
that cannot be computed by a PA or a HMM (see Appendix~\ref{supmat:rat}). 


\subsection{Moments method and tensor decomposition}

See \cite{Kolda09} for references on tensor decomposition. Let us denote by $\bigotimes^p \K^{n}$  the \emph{p-th order tensor product} of the vector space $\K^n$, where $\K=\R$ or $\C$. A tensor $\e{T}\in \bigotimes^p \K^n$ can be described by a $p$-way array of scalars $t_{i_1,\cdots,i_p}\in\K$ for $i_1, \cdots, i_p \in [n]$, where $[n]$ denotes the set of integers between $1$ and $n$. A tensor is \emph{symmetric} if its multi-way array representation is invariant under permutation of the indices. Given $\v{v}^{(1)}, \ldots, \v{v}^{(p)} \in \K^n$, the tensor $\v{v}^{(1)}\otimes \cdots \otimes \v{v}^{(p)} \in \bigotimes^p\K^n$ is defined by the $p$-way array $(v_{i_1}^{(1)} v_{i_2}^{(2)}\ldots v_{i_p}^{(p)})_{i_1, \cdots, i_p \in [n]}$. For a vector $\v{v}\in\K^n$, let $\v{v}^{\otimes ^p} = \v{v}\otimes \cdots \otimes \v{v}$ denote the $p$-th tensor power of $\v{v}$. In particular, $\v{v}\otimes \v{v} $ can be identified with the matrix $\v{v}\v{v}^\top$. Let $\v{x}$ be a $\R^n$-valued random variable, its moment of order $m$ is defined as the tensor $\Esp[\v{x}^{\otimes m}] \in \bigotimes^m\R^n$.

For any integers $m_1,\cdots,m_p\geq 1$, every $p$-th order tensor  $\e{T}\in \bigotimes^p \K^n$ induces a multilinear map  $\e{T}: \K^{n\times m_1} \times \cdots \times \K^{n\times m_p} \to \K^{m_1 \times \cdots \times m_p}$ defined by $\e{T}(\v{A}^{(1)},\cdots,\v{A}^{(p)})_{i_1,\cdots,i_p} = \sum_{j_1,\cdots,j_p\in [n]} t_{j_1,\cdots,j_p}a^{(1)}_{j_1 i_1}\cdots a^{(p)}_{j_p i_p}$ where each $i_k \in [m_k]$ for $k\in [p]$. In particular, $$\textrm{if } \e{T} = \sum_{i=1}^k \lambda_i \v{v}^{(1)}_i\otimes \cdots \otimes \v{v}_i^{(p)}\textrm{ then }\e{T}(\v{A}_1,\cdots,\v{A}_p) = \sum_{i=1}^k \lambda_i (\v{A}_1^\top \v{v}^{(1)}_i)\otimes\cdots\otimes(\v{A}_p^\top \v{v}^{(p)}_i).$$

The \emph{rank} of a tensor $\e{T}\in \bigotimes^p \K^n$ is the smallest integer $k$ such that $\e{T}$ can be written as $\e{T} = \sum_{i=1}^k \lambda_i \v{v}^{(1)}_i\otimes \cdots \otimes \v{v}_i^{(p)}$ with $\lambda_i\in\K$ and $\v{v}^{(1)}_i,\cdots,\v{v}_i^{(p)} \in \K^n$. The \emph{symmetric rank} of a symmetric tensor $\e{T}$ is the smallest integer $k$ such that $\e{T}$ can be written as $\e{T} = \sum_{i=1}^k \lambda_i \v{v}_i ^{\otimes ^p}$ with $\lambda_i\in\K$ and $\v{v}_i\in \K^n$. It has been shown that computing the rank of a tensor is NP-hard and it is conjectured that computing the symmetric rank is also NP-hard~\cite{Hillar:2013}. However, if a real-valued third-order tensor $\e{T}$ has a \emph{symmetric orthonormal decomposition}, i.e. $\e{T} = \sum_{i=1}^k \lambda_i \v{v}_i^{\otimes 3}$ with $\lambda_i\in\R$, $\v{v}_i \in \R^n$ and $\v{v}_i^\top \v{v}_j = \delta_{ij}$ for all $i,j\in [k]$, it has been shown in \cite{HsuKakade} that this decomposition can be recovered by several methods,  both efficient and robust to noise, such as the \emph{tensor power method} (see Section~\ref{sec:power} below). Moreover, they show that any \emph{symmetric independent decomposition} $\e{T} = \sum_{i=1}^k \lambda_i \v{v}_i^{\otimes 3}$ (where the $\v{v}_i$'s are independent but not necessarily orthonormal) can be recovered if we have access to the second order tensor $\v{M} = \sum_{i=1}^k \lambda_i \v{v}_i \v{v}_i^\top$.

\begin{theorem}\label{th:1}\cite{HsuKakade}
Let $\v{v}_1, \ldots, \v{v}_k$ be linearly independent vectors of $\R^n$, $\lambda_1, \ldots, \lambda_k$ be positive scalars, $\v{M}_2=\sum_{i=1}^k\lambda_i \v{v}_i\otimes \v{v}_i$ and $\e{M}_3=\sum_{i=1}^k\lambda_i \v{v}_i^{\otimes 3}$, let $\v{W}\in \R^{n\times k}$ be a matrix such that $\v{M}_2(\v{W},\v{W})=\v{I}_k$, the $k\times k$ identity matrix, and let $\vs{\nu}_i=\sqrt{\lambda_i}\v{W}^{\top}\v{v}_i$ for $i\in [k]$. Then, $\e{M}_3(\v{W},\v{W},\v{W})=\sum_{i=1}^k\lambda_i^{-1/2}\vs{\nu}_i^{\otimes 3}$ is an orthonormal decomposition from which the parameters $\lambda_i$ and $\v{v_i}$ can be computed. 
\end{theorem}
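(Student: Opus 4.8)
The plan is to establish the three assertions in turn: that $\e{M}_3(\v{W},\v{W},\v{W})$ equals $\sum_{i=1}^k\lambda_i^{-1/2}\vs{\nu}_i^{\otimes 3}$, that this is genuinely an \emph{orthonormal} decomposition, and that the original parameters $\lambda_i$ and $\v{v}_i$ can be read off from it. All three rest on the multilinear-map identity recalled above, namely that $\e{T}=\sum_i\lambda_i\v{v}_i^{\otimes p}$ implies $\e{T}(\v{A}_1,\dots,\v{A}_p)=\sum_i\lambda_i(\v{A}_1^\top\v{v}_i)\otimes\cdots\otimes(\v{A}_p^\top\v{v}_i)$.

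First I would apply this identity to $\e{M}_3$ with $\v{A}_1=\v{A}_2=\v{A}_3=\v{W}$, giving $\e{M}_3(\v{W},\v{W},\v{W})=\sum_{i=1}^k\lambda_i(\v{W}^\top\v{v}_i)^{\otimes 3}$. Since $\vs{\nu}_i=\sqrt{\lambda_i}\,\v{W}^\top\v{v}_i$ and the $\lambda_i$ are positive, we have $\v{W}^\top\v{v}_i=\lambda_i^{-1/2}\vs{\nu}_i$; substituting and collecting the scalar $\lambda_i\cdot\lambda_i^{-3/2}=\lambda_i^{-1/2}$ yields the claimed expression. This step is a direct computation.

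The heart of the argument is the orthonormality. Applying the same identity to $\v{M}_2$ gives $\v{I}_k=\v{M}_2(\v{W},\v{W})=\sum_{i=1}^k\lambda_i(\v{W}^\top\v{v}_i)(\v{W}^\top\v{v}_i)^\top=\sum_{i=1}^k\vs{\nu}_i\vs{\nu}_i^\top$. Writing $\v{V}=[\vs{\nu}_1\mid\cdots\mid\vs{\nu}_k]\in\R^{k\times k}$ (the $\vs{\nu}_i$ live in $\R^k$ because $\v{W}^\top$ maps $\R^n$ into $\R^k$), this reads $\v{V}\v{V}^\top=\v{I}_k$. Because $\v{V}$ is square, this forces $\v{V}$ to be orthogonal, hence $\v{V}^\top\v{V}=\v{I}_k$, which is exactly $\vs{\nu}_i^\top\vs{\nu}_j=\delta_{ij}$. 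The only subtlety here is the passage from $\v{V}\v{V}^\top=\v{I}_k$ to $\v{V}^\top\v{V}=\v{I}_k$, valid precisely because $\v{V}$ is a square matrix.

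Finally, since $\e{M}_3(\v{W},\v{W},\v{W})=\sum_i\lambda_i^{-1/2}\vs{\nu}_i^{\otimes 3}$ is now a bona fide symmetric orthonormal decomposition, the tensor power method (Section~\ref{sec:power}) recovers each eigenpair, i.e.\ each weight $\lambda_i^{-1/2}$ together with its eigenvector $\vs{\nu}_i$. From $\lambda_i^{-1/2}$ one recovers $\lambda_i$, and inverting $\vs{\nu}_i=\sqrt{\lambda_i}\,\v{W}^\top\v{v}_i$ recovers $\v{v}_i=\lambda_i^{-1/2}(\v{W}^+)^\top\vs{\nu}_i$, where $\v{W}^+$ denotes the Moore--Penrose pseudoinverse. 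I expect this last step to be where care is needed: the identity $\v{v}_i=\lambda_i^{-1/2}(\v{W}^+)^\top\vs{\nu}_i$ requires that each $\v{v}_i$ lie in the column space of $\v{W}$. This holds automatically for the canonical whitening choice $\v{W}=\v{E}\v{D}^{-1/2}$ obtained from the reduced eigendecomposition $\v{M}_2=\v{E}\v{D}\v{E}^\top$, since then $\mathrm{col}(\v{W})=\mathrm{range}(\v{M}_2)=\mathrm{span}\{\v{v}_1,\dots,\v{v}_k\}$; for a general $\v{W}$ satisfying $\v{M}_2(\v{W},\v{W})=\v{I}_k$ one first observes that any component of the columns of $\v{W}$ lying in $\ker\v{M}_2$ is annihilated in both $\v{M}_2(\v{W},\v{W})$ and $\e{M}_3(\v{W},\v{W},\v{W})$ (because $\v{v}_i\in\mathrm{range}(\v{M}_2)=(\ker\v{M}_2)^\perp$), so it may be projected away without loss.
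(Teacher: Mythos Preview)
The paper does not itself prove this theorem (it is cited from \cite{HsuKakade}), but your argument is correct and matches exactly the approach the paper takes in Section~\ref{pseudo-orth} for the complex-valued generalization: show $\sum_i\vs{\nu}_i\vs{\nu}_i^\top=\v{W}^\top\v{M}_2\v{W}=\v{I}_k$ and deduce orthonormality from the fact that $\v{V}=[\vs{\nu}_1\mid\cdots\mid\vs{\nu}_k]$ is square. Your handling of the recovery step via the pseudoinverse is in fact more explicit than the paper's, which in Theorem~\ref{overallProc} simply takes $(\v{W}^\top)^+=\v{UD}^{1/2}$ for the canonical whitening matrix without discussing the column-space requirement.
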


\subsection{Learning mixtures of spherical Gaussians}
\label{GaussMixSection}

The \emph{spherical Gaussian mixture model} is specified as follows:  let $k\geq 1$ be the number of components, and for $i\in [n]$, let $p_i>0$ be the probability of choosing the component ${\cal N}(\vs{\mu}_i,\sigma^2_i \v{I})$ where $\vs{\mu_i}\in \R^n$, $\sigma^2_i > 0$ and $\v{I}\in\R^{n\times n}$ is the identity matrix. 

Assuming that the component mean vectors $\vs{\mu}_i$ are linearly independent, the following result is proved in \cite{HsuKakade2}.

\begin{theorem} \label{GaussMixThm}
The average variance $\bar{\sigma}^2 = \sum_{i=1}^k p_i \sigma^2_i$ is the smallest eigenvalue of the covariance matrix $\Esp [(\v{x} - \Esp [\v{x}]) (\v{x} - \Esp [\v{x}]) ^\top]$. Let $\v{v}$ be any unit-norm eigenvector corresponding to $\bar{\sigma}^2$ and let \begin{align*}
\v{m}_1 &= \Esp[\v{x}(\v{v}^\top (\v{x} - \Esp [\v{x}]))^2], \ \ \ \ \  
\v{M}_2 = \Esp[\v{x}\otimes\v{x}] - \bar{\sigma}^2 \v{I}, \ \mbox{ and }\\
\e{M}_3 &=  \Esp[\v{x}\otimes\v{x}\otimes\v{x}] - \sum_{i=1}^n [\v{m}_1\otimes \v{e}_i \otimes \v{e}_i + \v{e}_i \otimes \v{m}_1\otimes \v{e}_i +  \v{e}_i \otimes \v{e}_i \otimes \v{m}_1]
\end{align*}
where $\v{e}_1,\cdots,\v{e}_n$ is the coordinate basis of $\R^n$. Then, $$\v{m}_1 = \sum_{i=1}^k p_i \sigma^2_i \vs{\mu}_i,\ \ \ \ \v{M}_2 = \sum_{i=1}^k p_i \vs{\mu}_i \otimes \vs{\mu}_i\  \textrm{, and }\ \ \ \e{M}_3 = \sum_{i=1}^k p_i \vs{\mu}_i \otimes \vs{\mu}_i \otimes \vs{\mu}_i.$$
\end{theorem}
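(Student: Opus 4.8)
The plan is to reduce the whole statement to the first three raw moments of a single spherical Gaussian and then average over the $k$ components. First I would record that if $\v{x}=\vs{\mu}+\v{z}$ with $\v{z}\sim\mathcal{N}(\v{0},\sigma^2\v{I})$, then $\Esp[\v{z}]=\v{0}$, $\Esp[\v{z}\otimes\v{z}]=\sigma^2\v{I}$, and every odd moment of $\v{z}$ vanishes. Expanding $(\vs{\mu}+\v{z})^{\otimes 2}$ and $(\vs{\mu}+\v{z})^{\otimes 3}$ and discarding the terms containing an odd number of $\v{z}$-factors then gives
\begin{align*}
\Esp[\v{x}\otimes\v{x}] &= \vs{\mu}\otimes\vs{\mu}+\sigma^2\v{I},\\
\Esp[\v{x}\otimes\v{x}\otimes\v{x}] &= \vs{\mu}^{\otimes 3}+\sigma^2\sum_{j=1}^n\left(\vs{\mu}\otimes\v{e}_j\otimes\v{e}_j+\v{e}_j\otimes\vs{\mu}\otimes\v{e}_j+\v{e}_j\otimes\v{e}_j\otimes\vs{\mu}\right),
\end{align*}
the only bookkeeping being the rewriting $\Esp[\v{z}\otimes\v{z}]=\sigma^2\sum_j\v{e}_j\otimes\v{e}_j$ inside the three surviving mixed terms of the third moment.

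Averaging these identities over the mixture (conditioning on the chosen component and using $\bar{\sigma}^2=\sum_i p_i\sigma_i^2$) gives $\Esp[\v{x}\otimes\v{x}]=\sum_i p_i\vs{\mu}_i\otimes\vs{\mu}_i+\bar{\sigma}^2\v{I}$, so subtracting $\bar{\sigma}^2\v{I}$ yields the claimed $\v{M}_2$. Likewise, averaging the third-moment identity produces $\sum_i p_i\vs{\mu}_i^{\otimes 3}$ plus the correction $\sum_i p_i\sigma_i^2\sum_j(\vs{\mu}_i\otimes\v{e}_j\otimes\v{e}_j+\v{e}_j\otimes\vs{\mu}_i\otimes\v{e}_j+\v{e}_j\otimes\v{e}_j\otimes\vs{\mu}_i)$. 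The key observation is that, once $\v{m}_1=\sum_i p_i\sigma_i^2\vs{\mu}_i$ is established, this correction is exactly $\sum_j(\v{m}_1\otimes\v{e}_j\otimes\v{e}_j+\v{e}_j\otimes\v{m}_1\otimes\v{e}_j+\v{e}_j\otimes\v{e}_j\otimes\v{m}_1)$, so the subtraction defining $\e{M}_3$ removes it and leaves $\sum_i p_i\vs{\mu}_i^{\otimes 3}$.

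It remains to prove the eigenvalue statement and the formula for $\v{m}_1$, and this is where linear independence and $p_i>0$ are used. Writing $\bar{\vs{\mu}}=\Esp[\v{x}]=\sum_i p_i\vs{\mu}_i$, the covariance matrix computes to $\bar{\sigma}^2\v{I}+\v{C}$ with $\v{C}=\sum_i p_i(\vs{\mu}_i-\bar{\vs{\mu}})(\vs{\mu}_i-\bar{\vs{\mu}})^\top$. I would note that $\v{C}$ is positive semidefinite (a weighted covariance of the means) and that $\mathrm{rank}(\v{C})\le k-1<n$: the $k$ centered means span a space of dimension at most $k-1$ because $\sum_i p_i(\vs{\mu}_i-\bar{\vs{\mu}})=\v{0}$, while $k\le n$ by linear independence. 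Hence $\v{C}$ is singular, the eigenvalues of the covariance are $\bar{\sigma}^2$ plus the nonnegative eigenvalues of $\v{C}$, and its smallest eigenvalue is $\bar{\sigma}^2$, attained precisely on $\ker\v{C}$.

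Finally, any unit eigenvector $\v{v}$ for $\bar{\sigma}^2$ lies in $\ker\v{C}$, so $\v{v}^\top\v{C}\v{v}=\sum_i p_i(\v{v}^\top(\vs{\mu}_i-\bar{\vs{\mu}}))^2=0$; since every $p_i>0$, each summand must vanish, giving $\v{v}^\top(\vs{\mu}_i-\bar{\vs{\mu}})=0$ for all $i$. Conditioning $\v{m}_1=\Esp[\v{x}(\v{v}^\top(\v{x}-\bar{\vs{\mu}}))^2]$ on component $i$ and writing $\v{x}=\vs{\mu}_i+\v{z}_i$, this orthogonality collapses $\v{v}^\top(\v{x}-\bar{\vs{\mu}})$ to $\v{v}^\top\v{z}_i$, so the conditional expectation is $\Esp[(\vs{\mu}_i+\v{z}_i)(\v{v}^\top\v{z}_i)^2]=\sigma_i^2\vs{\mu}_i$, using $\Esp[(\v{v}^\top\v{z}_i)^2]=\sigma_i^2\|\v{v}\|^2=\sigma_i^2$ and the vanishing third moment $\Esp[\v{z}_i(\v{v}^\top\v{z}_i)^2]=\v{0}$. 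Averaging over $i$ gives $\v{m}_1=\sum_i p_i\sigma_i^2\vs{\mu}_i$, which closes the $\e{M}_3$ computation of the previous paragraph. I expect the $\v{m}_1$ step to be the main obstacle: it is the only part that needs the spectral characterization of $\v{v}$, and the whole formula hinges on the reduction $\v{v}^\top(\v{x}-\bar{\vs{\mu}})=\v{v}^\top\v{z}_i$, which turns an otherwise messy third moment into a single vanishing odd Gaussian moment.
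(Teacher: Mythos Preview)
Your proof is correct and follows essentially the same route as the paper. Note that the paper does not actually prove Theorem~\ref{GaussMixThm} itself (it is quoted from \cite{HsuKakade2}); what the paper proves in Appendix~\ref{supmat:learning} is the generalization Theorem~\ref{NegGaussMixThm}, and that proof proceeds exactly as you do: write $\v{x}=\vs{\mu}_i+\v{z}_i$ component-wise, expand the second and third raw moments, rewrite $\Esp[\v{z}_i\otimes\v{z}_i]$ as $\sigma_i^2\sum_j\v{e}_j\otimes\v{e}_j$, and identify the correction term in $\Esp[\v{x}^{\otimes 3}]$ with the $\v{m}_1$ sum. The only place where your argument is slightly more direct is the step $\v{v}^\top(\vs{\mu}_i-\bar{\vs{\mu}})=0$: you use $\v{v}^\top\v{C}\v{v}=\sum_i p_i(\v{v}^\top(\vs{\mu}_i-\bar{\vs{\mu}}))^2=0$ together with $p_i>0$, whereas the paper (working with possibly negative $w_i$) argues instead via a rank count that $\ker\v{C}$ equals the orthogonal complement of $\mathrm{span}\{\vs{\mu}_i-\bar{\vs{\mu}}\}$. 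For the positive-weight statement at hand, your version is the natural one.
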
   

The previous results induce a learning scheme: (i) estimate $\v{m}_1$, $\v{M}_2 $ and $\e{M}_3$ from the learning data; (ii) compute an orthonormal decomposition as in Theorem~\ref{th:1}; (iii) use the tensor power method to compute the mean vectors $\vs{\mu}_i$ and the probabilities $p_i$  and (iv) use $\v{m}_1$ to recover the variance parameters $\sigma^2_i$.




\section{Negative mixtures}\label{seg:negmix}
Given a finite set of probability density functions $f_1 \ldots, f_k$,
and non negative weights $w_1, \ldots, w_k$ satisfying $w_1 + \ldots +
w_k=1$, $w_1f_1+ \ldots + w_kf_k$ is a probability density function
called a \emph{finite mixture}.  It may happen that $w_1f_1+ \ldots +
w_kf_k$ defines a \textsc{pdf} even if some weights
are negative. We call such a function a \emph{negative} or a
\emph{generalized} mixture.

For example, if $f$ and $g$ are two \textsc{pdf}
satisfying $g\leq cf$ for some $c>1$, then $\alpha f-(\alpha - 1) g$ is a
negative mixture for any $0\leq \alpha - 1\leq (c-1)^{-1}.$ It can
easily be shown, by grouping the positive and negative weights respectively, that any negative mixture can be written as a negative
mixture of \emph{two} positive mixtures:$$\sum_{i=1}^k\alpha_if_i-
\sum_{j=1}^h\beta_jg_j=A\left(\sum_{i=1}^k\frac{\alpha_i}{A}f_i\right)-B\left(\sum_{j=1}^h\frac{\beta_j}{B}g_j\right)$$
where $\alpha_i, \beta_j>0$, $A= \sum_{i=1}^k\alpha_i$,
$B=\sum_{j=1}^h\beta_j$ and $A-B=1$.

If $f,g$ and $\alpha$ are known, and if we have access to a random
generator ${\cal D}_f$, then Algorithm~\ref{algo:sim} simulates the
distribution ${\cal
  D}_{\alpha f-(\alpha-1) g}$ by rejection sampling. 

\begin{algorithm}
\caption{Simulating a negative mixture}
\label{algo:sim}
\begin{algorithmic}[H]
\STATE drawn $\leftarrow$ false\\
\WHILE{not drawn}
\STATE draw $x$ according to ${\cal D}_f$
\STATE draw $e$ uniformly in $[0,1]$
\IF{$e\alpha f(x)\geq (\alpha-1) g(x)$}
\STATE drawn $\leftarrow$ true
\ENDIF
\ENDWHILE
\RETURN $x$
\end{algorithmic}
\end{algorithm}


\subsection{Negative mixtures and rational distributions on strings}\label{sec:rat}
We show below that every rational probability distribution
on strings can be generated by the generalized mixture of at most two
probabilistic automata. The proof relies on the following lemmas. 
\begin{lemma}\label{lem:difplus}
Any rational series is the difference of two rational series with non
negative coefficients. 
\end{lemma}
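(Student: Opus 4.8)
The plan is to start from a linear representation $\langle \vs{\iota},(\v{M}_x)_{x\in\Sigma},\vs{\tau}\rangle$ of the rational series $r$ and to build, out of it, two linear representations with entirely non-negative coefficients whose difference still computes $r$; this is a \emph{sign-tracking doubling} construction. The first step is to split every coefficient into its non-negative parts: write each matrix as $\v{M}_x = \v{M}_x^+ - \v{M}_x^-$, where $\v{M}_x^+$ and $\v{M}_x^-$ are the entrywise positive and negative parts of $\v{M}_x$ (so both are non-negative), and likewise $\vs{\iota} = \vs{\iota}^+ - \vs{\iota}^-$ and $\vs{\tau} = \vs{\tau}^+ - \vs{\tau}^-$.

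Next I would form, for each letter $x\in\Sigma$, the $2n\times 2n$ non-negative block matrix
$$\v{N}_x = \left(\begin{array}{cc} \v{M}_x^+ & \v{M}_x^- \\ \v{M}_x^- & \v{M}_x^+\end{array}\right).$$
The key observation, which I would establish by induction on the length of $u=u_1\cdots u_\ell$, is that this block pattern is stable under multiplication: there exist non-negative matrices $P_u,Q_u\geq 0$ such that the product $\v{N}_{u_1}\cdots\v{N}_{u_\ell}$ carries $P_u$ on its diagonal blocks and $Q_u$ on its off-diagonal blocks, while at the same time $\v{M}_{u_1}\cdots\v{M}_{u_\ell}=P_u-Q_u$. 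The base case $\ell=0$ is the identity ($P=\v{I}_n$, $Q=0$), and the inductive step is exactly the identity $(A^+-A^-)(B^+-B^-)=(A^+B^++A^-B^-)-(A^+B^-+A^-B^+)$, whose two non-negative halves are precisely the diagonal and off-diagonal blocks of the product of the two doubled matrices.

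Finally I would read off the two series by choosing suitable non-negative boundary vectors over the doubled representation. Expanding $r(u)=(\vs{\iota}^+-\vs{\iota}^-)^\top(P_u-Q_u)(\vs{\tau}^+-\vs{\tau}^-)$ produces eight terms, each a product of non-negative quantities carried by a sign; collecting those with an even number of minus signs into $r_+$ and those with an odd number into $r_-$ gives $r=r_+-r_-$ with $r_+,r_-\geq 0$ everywhere. One then checks that $r_+(u)=\vs{a}^\top\v{N}_{u_1}\cdots\v{N}_{u_\ell}\,\vs{b}$ with $\vs{a}=(\vs{\iota}^+;\vs{\iota}^-)$ and $\vs{b}=(\vs{\tau}^+;\vs{\tau}^-)$ (stacked vectors), and $r_-(u)=\vs{a}^\top\v{N}_{u_1}\cdots\v{N}_{u_\ell}\,\vs{b}'$ with the same $\vs{a}$ but $\vs{b}'=(\vs{\tau}^-;\vs{\tau}^+)$. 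Both are thus rational with an entirely non-negative linear representation, hence non-negative series, and $r=r_+-r_-$ by construction.

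I expect the only delicate point to be the bookkeeping of the last step: verifying that the even-sign and odd-sign groupings of the eight expanded terms match the two claimed choices of boundary vectors, which amounts to reading the diagonal blocks $P_u$ and off-diagonal blocks $Q_u$ off correctly when multiplying $\vs{a}^\top\v{N}_u\vs{b}$. The block-stability induction itself is routine, and once it is established the non-negativity of $r_+$ and $r_-$ is immediate, since each is a product of non-negative vectors and matrices.
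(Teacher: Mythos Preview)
Your proposal is correct and follows essentially the same construction as the paper: the same doubled non-negative matrices $\v{N}_x=\widetilde{\v{M}}_x$, the same stacked initial vector $(\vs{\iota}^+;\vs{\iota}^-)$, and the same terminal vector $(\vs{\tau}^+;\vs{\tau}^-)$ for $r_+$. The only cosmetic difference is that for $r_-$ you swap the two halves of the terminal vector while the paper swaps the two halves of the initial vector; by the block symmetry of $\v{N}_u$ these yield the same series, so the two proofs are effectively identical.
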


\begin{proof} 
For any real number $x$, let $x^+=\max\{x,0\}$ and $x^-=\max\{-x,0\}$. So,
$x=x^+-x^-$. These operators are extended to vectors and matrices by
applying them to all their coefficients. 

Let $\langle \vs{\iota},(\v{M}_x)_{x\in \Sigma},\vs{\tau}\rangle$ be an
$n$-dimensional representation of a rational series $r$. Let us
define 
$$\widetilde{\vs{\iota}}_1=\left(
  \begin{array}{c}
    \vs{\iota}^+\\\vs{\iota}^-
  \end{array}
\right), \widetilde{\vs{\iota}}_2=\left(
  \begin{array}{c}
    \vs{\iota}^-\\\vs{\iota}^+
  \end{array}
\right), \widetilde{\vs{\tau}}=\left(
  \begin{array}{c}
    \vs{\tau}^+\\\vs{\tau}^-
  \end{array}
\right)\textrm{ and } \widetilde{\v{M}}_x=\left(
  \begin{array}{cc}
    \v{M}_x^+&\v{M}_x^-\\\v{M}_x^-&\v{M}_x^+
  \end{array}
\right)\textrm{ for each }x\in \Sigma.$$

\normalsize
 Let $r^+$ (resp. $r^-$) be the rational series
defined by the linear representation $\langle
\widetilde{\vs{\iota}_1},(\widetilde{\v{M}}_x)_{x\in \Sigma},\tilde{\vs{\tau}}\rangle$ (resp. $\langle
\widetilde{\vs{\iota}_2},(\widetilde{\v{M}}_x)_{x\in \Sigma},\widetilde{\vs{\tau}}\rangle$). Then,
$r=r^+-r^-$. Indeed, it can easily be checked that for any vectors
$\v{u}_1, \v{u}_2, \v{v}_1, \v{v}_2\in \R^n$, $$\widetilde{\v{M}}_x \left(
  \begin{array}{c} \v{u}_1\\\v{u}_2 \end{array} \right)=\left(
  \begin{array}{c} \v{v}_1\\\v{v}_2 \end{array} \right)\Rightarrow
\v{M}_x(\v{u}_1-\v{u}_2)=\v{v}_1-\v{v}_2.$$Therefore, for any $w=w_1\ldots w_n\in
\Sigma^*$, $$\widetilde{\v{M}}_{w_1} \ldots \widetilde{\v{M}}_{w_n}\widetilde{\vs{\tau}}=\left(
  \begin{array}{c} \v{v}_1\\\v{v}_2 \end{array} \right)\Rightarrow
\v{M}_{w_1}\ldots \v{M}_{w_n} \vs{\tau}=\v{v}_1-\v{v}_2.$$Since $$(\widetilde{\vs{\iota}_1}^{\top}-\widetilde{\vs{\iota}_2}^{\top}) \left(
  \begin{array}{c} \v{v}_1\\\v{v}_2 \end{array} \right)=\vs{\iota}^{\top}(\v{v}_1-\v{v}_2),$$it can easily be checked that $$r^+(w)-r^-(w)=(\widetilde{\vs{\iota}_1}^{\top}-\widetilde{\vs{\iota}_2}^{\top}) \widetilde{\v{M}}_{w_1} \ldots \widetilde{\v{M}}_{w_n}\widetilde{\vs{\tau}}=r(w).$$
\end{proof}
However, even if the non negative series $r$ is convergent, the series
$r^+$ and $r^-$ obtained from the previous construction can be
divergent (see an example in Appendix~\ref{supmat:rat}). It has
been shown in~\cite{DBLP:journals/iandc/BaillyD11} that if a rational series $r$ is absolutely
convergent, then it can always be computed by a linear representation
$\langle \vs{\iota},(\v{M}_x)_{x\in \Sigma},\vs{\tau}\rangle$ such that $\langle
|\vs{\iota}|,(|\v{M}_x|)_{x\in \Sigma},|\vs{\tau}|\rangle$ defines a positive convergent
series $s$. In that case, $r^+$ and $r^-$ are bounded by $s$ and are
convergent. Let $s^+=r^+(\Sigma^*)$, $s^-=r^-(\Sigma^*)$ and let
$p^+=r^+/s^+$ and $p^-=r^-/s^-$ : $p^+$ and $p^-$ are rational
probability distributions and if $r$ is itself a probability distribution, we
have $s^+-s^-=1$ and $r$ is equal to the generalized mixture
$s^+p^+-s^-p^-$.  It remains to prove that $p^+$ and $p^-$ can be
computed by a probabilistic automaton. 
\begin{lemma}Let $\langle\vs{\iota},(\v{M}_x)_{x\in \Sigma},\vs{\tau}\rangle$ be
  an $n$-dimensional
  minimal \emph{non negative} linear representation of a probability
  distribution $p$. Let $\vs{\lambda}=(\v{I}-\v{M}_{\Sigma})^{-1}\vs{\tau}$ and
  $\v{D}=diag(\vs{\lambda})$. 
  
  Then, $\langle \v{D}\vs{\iota},(\v{D}^{-1}\v{M}_x\v{D})_{x\in
    \Sigma},\v{D}^{-1}\vs{\tau}\rangle$ is a probabilistic automaton that
  recognizes $p$.
\end{lemma}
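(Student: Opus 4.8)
The plan is to check that the rescaled triplet $\langle \v{D}\vs{\iota},(\v{D}^{-1}\v{M}_x\v{D})_{x\in\Sigma},\v{D}^{-1}\vs{\tau}\rangle$ meets all the requirements in the definition of a probabilistic automaton, namely that it computes $p$, that all its coefficients are non-negative, and that the three syntactic conditions $(\v{D}\vs{\iota})^{\top}{\mathbf 1}=1$, $\v{I}-\v{D}^{-1}\v{M}_\Sigma\v{D}$ invertible, and $(\v{I}-\v{D}^{-1}\v{M}_\Sigma\v{D})^{-1}\v{D}^{-1}\vs{\tau}={\mathbf 1}$ hold. That it computes $p$ is immediate from the fact that $\v{D}$ is an invertible diagonal matrix, so the conjugation telescopes: for every word $w=w_1\cdots w_l$ one has $(\v{D}\vs{\iota})^{\top}(\v{D}^{-1}\v{M}_{w_1}\v{D})\cdots(\v{D}^{-1}\v{M}_{w_l}\v{D})(\v{D}^{-1}\vs{\tau})=\vs{\iota}^{\top}\v{M}_{w_1}\cdots\v{M}_{w_l}\vs{\tau}=p(w)$, using $\v{D}^{\top}=\v{D}$ and the cancellation of every adjacent pair $\v{D}\v{D}^{-1}$. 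The non-negativity of the three objects then reduces to showing that $\vs{\lambda}\geq\v{0}$ with strictly positive entries, since in that case $\v{D}\vs{\iota}\geq\v{0}$, $\v{D}^{-1}\vs{\tau}\geq\v{0}$, and $(\v{D}^{-1}\v{M}_x\v{D})_{ij}=\vs{\lambda}_i^{-1}(\v{M}_x)_{ij}\vs{\lambda}_j\geq 0$.

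The heart of the argument, and the step I expect to be the main obstacle, is precisely establishing that $\v{D}$ is well defined and invertible, i.e. that $\vs{\lambda}_i>0$ for every $i$; this is where minimality is essential. First, because $p$ is a convergent distribution computed by a non-negative representation, the Neumann series $\sum_{k\geq 0}\v{M}_\Sigma^k$ converges to $(\v{I}-\v{M}_\Sigma)^{-1}$, so $\vs{\lambda}=(\v{I}-\v{M}_\Sigma)^{-1}\vs{\tau}=\sum_{w\in\Sigma^*}\v{M}_w\vs{\tau}$ is a sum of non-negative vectors and hence $\vs{\lambda}\geq\v{0}$. For strict positivity I would invoke the standard fact that, for an $n$-dimensional \emph{minimal} representation, the backward vectors $\{\v{M}_w\vs{\tau}:w\in\Sigma^*\}$ span $\R^n$. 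If some $\vs{\lambda}_i$ were zero, then, all terms $(\v{M}_w\vs{\tau})_i$ being non-negative and summing to zero, we would have $(\v{M}_w\vs{\tau})_i=0$ for every $w$, placing all backward vectors in the coordinate hyperplane $\{x_i=0\}$ and contradicting the spanning property. Hence $\vs{\lambda}_i>0$ for all $i$, so $\v{D}$ is invertible and the construction makes sense.

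It then remains to verify the three syntactic conditions by direct conjugation computations. For the initial condition, $(\v{D}\vs{\iota})^{\top}{\mathbf 1}=\vs{\iota}^{\top}\v{D}{\mathbf 1}=\vs{\iota}^{\top}\vs{\lambda}=\vs{\iota}^{\top}(\v{I}-\v{M}_\Sigma)^{-1}\vs{\tau}=\sum_{w\in\Sigma^*}p(w)=1$, since $p$ is a probability distribution. For invertibility, note that $\v{D}^{-1}\v{M}_\Sigma\v{D}=\sum_{x\in\Sigma}\v{D}^{-1}\v{M}_x\v{D}$ gives $\v{I}-\v{D}^{-1}\v{M}_\Sigma\v{D}=\v{D}^{-1}(\v{I}-\v{M}_\Sigma)\v{D}$, a product of invertible matrices. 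Finally, applying its inverse $\v{D}^{-1}(\v{I}-\v{M}_\Sigma)^{-1}\v{D}$ to $\v{D}^{-1}\vs{\tau}$ yields $\v{D}^{-1}(\v{I}-\v{M}_\Sigma)^{-1}\vs{\tau}=\v{D}^{-1}\vs{\lambda}={\mathbf 1}$, the last equality because $\v{D}=\mathrm{diag}(\vs{\lambda})$. All four requirements being met, the rescaled triplet is a probabilistic automaton computing $p$.
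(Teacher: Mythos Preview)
Your proof is correct and follows the same conjugation-by-$\v{D}$ approach as the paper, verifying that the rescaled triplet computes $p$ and satisfies the three syntactic conditions of a probabilistic automaton. The paper's own proof is considerably terser---it simply asserts that minimality makes $\v{D}$ invertible and then checks the identities---whereas you spell out the argument for $\vs{\lambda}_i>0$ via the Neumann series and the spanning property of the backward vectors, and you also explicitly verify non-negativity of the new coefficients, which the paper leaves implicit.
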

\begin{proof}
The minimality of the representation implies that $\v{D}$ is
invertible. It is clear that the new representation recognizes $p$
since $(\v{D}\vs{\iota})^{\top}\v{D}^{-1}\v{M}_{x_1}\v{D}\ldots \v{D}^{-1}\v{M}_{x_n}\v{D}
\v{D}^{-1}\vs{\tau}=\vs{\iota}^{\top}\v{M}_{x_1}\ldots \v{M}_{x_n}\vs{\tau}.$ We
have $(\v{D}\vs{\iota})^{\top}\v{1}=\vs{\iota}^{\top}\vs{\lambda}=1.$ Moreover,
$\v{I}-\v{D}^{-1}\v{M}_{\Sigma}\v{D}=\v{D}^{-1}(\v{I}-\v{M}_{\Sigma})\v{D}$ is invertible and $(\v{I}-\v{D}^{-1}\v{M}_{\Sigma}\v{D})^{-1}\v{D}^{-1}\vs{\tau}=\v{D}^{-1}\vs{\lambda}=\v{1}$.
\end{proof}

Combining the previous lemmas, we obtain the following theorem.
\begin{theorem}
Every rational probability distribution on strings can be generated by the generalized mixture of at most two probabilistic automata. 
\end{theorem}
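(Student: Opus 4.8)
The plan is to assemble the two lemmas above together with the absolute-convergence result already recalled in the discussion. Let $p$ be a rational probability distribution on $\Sigma^*$. Being a probability distribution, $p$ is a non-negative series converging to $1$, hence absolutely convergent, so by the result of \cite{DBLP:journals/iandc/BaillyD11} I may fix a linear representation $\langle\vs{\iota},(\v{M}_x)_{x\in\Sigma},\vs{\tau}\rangle$ of $p$ whose entrywise-absolute-value representation $\langle|\vs{\iota}|,(|\v{M}_x|)_{x\in\Sigma},|\vs{\tau}|\rangle$ defines a positive convergent series $s$. Starting from this particular representation, rather than an arbitrary one, is what makes the rest of the argument go through.

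First I would apply Lemma~\ref{lem:difplus} to this representation, obtaining rational series $r^+$ and $r^-$ with non-negative coefficients such that $p = r^+ - r^-$. The point of having chosen the representation above is that both $r^+$ and $r^-$ are now dominated coefficientwise by $s$, hence convergent; this is exactly the step that fails for an arbitrary representation. Setting $s^+ = r^+(\Sigma^*)$, $s^- = r^-(\Sigma^*)$ and $p^+ = r^+/s^+$, $p^- = r^-/s^-$, I obtain two rational probability distributions with non-negative coefficients. Summing the identity $p = r^+ - r^-$ over all of $\Sigma^*$ and using $p(\Sigma^*)=1$ yields $s^+ - s^- = 1$, so that $p = s^+ p^+ - s^- p^-$ is a generalized mixture of two probability distributions in exactly the required form; the degenerate case $s^- = 0$ (where $p=p^+$ is already a single automaton) is covered by the ``at most two'' in the statement.

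It remains to show that $p^+$ and $p^-$ are each computed by a probabilistic automaton, and this is where I would invoke the second lemma. Each of $p^+$ and $p^-$ already carries a non-negative linear representation inherited from the construction above; passing to a minimal non-negative representation, the lemma converts it into a genuine probabilistic automaton via the diagonal change of basis $\v{D}=diag((\v{I}-\v{M}_\Sigma)^{-1}\vs{\tau})$. The main obstacle I anticipate is precisely the minimality hypothesis feeding into that lemma: I need to guarantee that a rational probability distribution admitting a non-negative representation also admits a minimal one that is still non-negative, so that $\v{D}$ is invertible, since a zero entry in $\vs{\lambda}=(\v{I}-\v{M}_\Sigma)^{-1}\vs{\tau}$ would signal a redundant coordinate and hence non-minimality. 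Convergence of $s$ guarantees that $\v{I}-\v{M}_\Sigma$ is invertible and that $\vs{\lambda}$ is well defined and non-negative, so once minimality is secured the remaining verifications are the routine identities already checked in the proof of the second lemma. Combining the two applications of that lemma with the generalized-mixture identity $p = s^+p^+ - s^-p^-$ then completes the proof.
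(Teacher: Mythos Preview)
Your proposal is correct and follows essentially the same route as the paper: the paper's proof is literally the sentence ``Combining the previous lemmas, we obtain the following theorem,'' with the substance carried by the discussion between the two lemmas, which is exactly the absolute-convergence representation from \cite{DBLP:journals/iandc/BaillyD11}, the splitting $p=r^+-r^-$, the normalization to $p^\pm$, and then the second lemma. You have reproduced this argument faithfully, and you are in fact more explicit than the paper about the one delicate point---the minimality hypothesis needed for the second lemma---which the paper simply assumes without comment.
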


\comment{
\begin{proof}
Indeed, every rational probability distribution $p$ can expressed as
$p=s^+p^+-(s^+-1)p^-$ where $p^+$ and $p^-$ can be
computed by a probabilistic automaton. 
\end{proof}
}

\subsection{Negative mixtures and gaussians}

Let 
$f$ and $g$ be the \textsc{pdf} of the two $k$-dimensional Gaussian
distributions ${\cal N}(\vs{\mu}_f,\vs{\Sigma}_f)$ and ${\cal N}(\vs{\mu}_g,\vs{\Sigma}_g)$.

For any real number $\alpha>0$,
\begin{equation}
\alpha f(\v{x})- (\alpha-1) g(\v{x}) \geq 0 \label{eq:g}
\end{equation}
if and only if 
$$\exp\left\{ -\frac{1}{2}(\v{x}-\vs{\mu}_f)^\top\vs{\Sigma}_f^{-1}(\v{x}-\vs{\mu}_f)+\frac{1}{2}(\v{x}-\vs{\mu}_g)^\top\vs{\Sigma}_g^{-1}(\v{x}-\vs{\mu}_g)\right\}
\geq \sqrt{\frac{|\vs{\Sigma}_f|}{|\vs{\Sigma}_g|}}[1-1/\alpha].$$
There exists $\alpha > 1$ such that~(\ref{eq:g}) holds for any $\v{x}\in
\R^k$ if and only if
\begin{equation}
-(\v{x}-\vs{\mu}_f)^\top\vs{\Sigma}_f^{-1}(\v{x}-\vs{\mu}_f)+(\v{x}-\vs{\mu}_g)^\top\vs{\Sigma}_g^{-1}(\v{x}-\vs{\mu}_g)\label{eq:2}
\end{equation}
has a finite lower bound which holds if and only if
$(\vs{\Sigma}_g^{-1}-\vs{\Sigma}_f^{-1})$ is positive semi-definite. 

In that case, the minimum $m$ of~(\ref{eq:2}) is attained
for $$\vs{\mu}_0=\vs{\Sigma}_0 (\vs{\Sigma}_g^{-1}\vs{\mu}_g-\vs{\Sigma}_f^{-1}\vs{\mu}_f)$$
where $\vs{\Sigma}_0=(\vs{\Sigma}_g^{-1}-\vs{\Sigma}_f^{-1})^{-1}$,
 and there exists a constant $\lambda$ such that $\lambda g/f$
defines a Gaussian distribution of parameters $\vs{\mu}_0$ and $\vs{\Sigma}_0$. It can be checked that $$m=-(\vs{\mu}_f-\vs{\mu}_g)^\top\vs{\Sigma}_f^{-1}\vs{\Sigma}_0\vs{\Sigma}_g^{-1}(\vs{\mu}_f-\vs{\mu}_g).$$


Note that if the two distributions are distinct, then
$\left(\frac{|\vs{\Sigma}_g|}{|\vs{\Sigma}_f|}\right)^{1/2}e^{m/2}-1<0$. Otherwise, any positive $\alpha$ would be suitable and by
dividing~(\ref{eq:g}) by $\alpha$,  the density of the first
distribution would be everywhere larger than the density of the second,
which cannot happen. Hence every $$\alpha\in
\left]1,\left(1-\sqrt{\frac{|\vs{\Sigma}_g|}{|\vs{\Sigma}_f|}}e^{m/2}\right)^{-1}\right]$$ defines a valid
negative mixture of the two distributions. 

If the gaussians are spherical, i.e. $\vs{\Sigma}_f=\sigma^2_f\v{I}$ and
$\vs{\Sigma}_g=\sigma^2_g\v{I}$, we obtain the following result. 
\begin{proposition}
  $\alpha f(\v{x})- (\alpha-1) g(\v{x}) $ defines a negative mixture iff
  $$\sigma_f>\sigma_g\textrm{ and }1<\alpha\leq \left(1-\frac{\sigma^k_g}{\sigma^k_f} \exp\left\{-\frac{1}{2}\frac{||\vs{\mu}_f-\vs{\mu}_g||^2}{\sigma_f^2-\sigma_g^2}\right\}\right)^{-1}.$$ 
\end{proposition}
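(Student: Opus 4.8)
The plan is to specialize the general computation carried out just above for arbitrary covariances $\vs{\Sigma}_f,\vs{\Sigma}_g$ to the spherical case $\vs{\Sigma}_f=\sigma_f^2\v{I}$ and $\vs{\Sigma}_g=\sigma_g^2\v{I}$. Essentially all the work has already been done in establishing the equivalences around (\ref{eq:g}) and (\ref{eq:2}); what remains is to substitute these covariances into the condition for (\ref{eq:2}) to be bounded below, into the closed form for its minimum $m$, and into the endpoint of the admissible interval for $\alpha$.

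First I would treat the condition for the existence of a valid $\alpha>1$. Here $\vs{\Sigma}_g^{-1}-\vs{\Sigma}_f^{-1}=(\sigma_g^{-2}-\sigma_f^{-2})\v{I}$, which is positive semidefinite iff $\sigma_f\geq\sigma_g$. When $\sigma_f=\sigma_g$ the quadratic terms in (\ref{eq:2}) cancel, leaving an affine function of $\v{x}$ that is bounded below only if $\vs{\mu}_f=\vs{\mu}_g$, i.e. only if the two Gaussians coincide; this degenerate case is ruled out by the preceding remark, since a genuine negative mixture forces distinct distributions. Hence the admissible regime is exactly the strict inequality $\sigma_f>\sigma_g$, under which $\vs{\Sigma}_0=(\vs{\Sigma}_g^{-1}-\vs{\Sigma}_f^{-1})^{-1}=\frac{\sigma_f^2\sigma_g^2}{\sigma_f^2-\sigma_g^2}\v{I}$ is well defined.

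Next I would evaluate the two quantities entering the endpoint. Using $\vs{\Sigma}_f^{-1}\vs{\Sigma}_0\vs{\Sigma}_g^{-1}=\frac{1}{\sigma_f^2-\sigma_g^2}\v{I}$, the minimum becomes $m=-\frac{||\vs{\mu}_f-\vs{\mu}_g||^2}{\sigma_f^2-\sigma_g^2}$, and since $|\vs{\Sigma}_g|/|\vs{\Sigma}_f|=\sigma_g^{2k}/\sigma_f^{2k}$ we get $\sqrt{|\vs{\Sigma}_g|/|\vs{\Sigma}_f|}=\sigma_g^k/\sigma_f^k$. Substituting both into the interval $\left]1,\left(1-\sqrt{|\vs{\Sigma}_g|/|\vs{\Sigma}_f|}\,e^{m/2}\right)^{-1}\right]$ yields precisely the upper bound on $\alpha$ stated in the proposition.

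The argument is thus a short sequence of substitutions, so there is no deep obstacle; the two points needing care are the boundary case $\sigma_f=\sigma_g$ (which is what upgrades $\sigma_f\geq\sigma_g$ to the strict inequality) and verifying that the endpoint is a genuine upper bound exceeding $1$. For the latter I would observe that $\sigma_f>\sigma_g$ gives $\sigma_g^k/\sigma_f^k\in(0,1)$ and $m\leq 0$ gives $e^{m/2}\in(0,1]$, so the bracketed quantity lies in $(0,1)$ and its reciprocal exceeds $1$; this confirms that the interval is nonempty and consistent with requiring the negative weight $-(\alpha-1)<0$.
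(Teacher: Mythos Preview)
Your proposal is correct and follows exactly the paper's intended route: the proposition is stated immediately after the general analysis of (\ref{eq:g})--(\ref{eq:2}) with the sentence ``If the gaussians are spherical \ldots\ we obtain the following result,'' so the proof is precisely the specialization you describe. Your substitutions for $\vs{\Sigma}_0$, $m$, and $\sqrt{|\vs{\Sigma}_g|/|\vs{\Sigma}_f|}$ are all correct, and your handling of the boundary case $\sigma_f=\sigma_g$ (which the paper's PSD criterion glosses over) is a nice clarification.
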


\textbf{Example} Let $k=2$, $\vs{\mu}_f=(11.4\ \ -3.4)^{\top}$, $\sigma_f^2=8$,
$\vs{\mu}_g=(11.9\ \ -1.9)^{\top}$, $\sigma_g^2=4$: $\alpha f(\v{x})- (\alpha-1) g(\v{x})
$ defines a negative mixture for any $1<\alpha\leq 1.57$. See figure~\ref{fig_pdf_cvg}.


\section{Negative Mixtures and the Power Method}\label{sec:mainresults}

 We consider systems of the form
\begin{equation}
\label{system}
\v{M}_2 = \sum_{i=1}^k w_i\vs{\mu}_i\otimes\vs{\mu}_i\ \ \  \mbox{ and }\ \ \  \e{M}_3 = \sum_{i=1}^k w_i\vs{\mu}_i\otimes\vs{\mu}_i \otimes\vs{\mu}_i
\end{equation}
where the vectors $\vs{\mu}_1,\cdots,\vs{\mu}_k \in \R^d$ are linearly independent and $w_1,\cdots,w_k\in\R$ are non zero. 

In this section, we show how the parameters $w_i$ and $\vs{\mu}_i$ can be recovered from $\v{M}_2$ and $\e{M}_3$  using a \emph{power method for complex-valued tensors}. 

\subsection{Pseudo-Orthonormalization} 
\label{pseudo-orth}
A set $\{\vs{\nu}_1, \ldots, \vs{\nu}_k\}\subset \C^d$ is \emph{pseudo-orthonormal} iff $\vs{\nu}_i^\top\vs{\nu}_j = \delta_{ij}$ for all $i,j\in [k]$. Note that for any $\vs{\nu}=(\nu_1, \ldots, \nu_d)\in \C^d$, $\vs{\nu}^{\top}\vs{\nu}=\nu_1^2+ \ldots + \nu_d^2\in \C$ and in particular, $\vs{\nu}^{\top}\vs{\nu}\neq ||\vs{\nu}||_2^2 =|\nu_1|^2+ \ldots + |\nu_n|^2$. It can easily be checked that a pseudo-orthonormal set is linearly independent. A tensor decomposition $\e{T} = \sum_{i=1}^k z_i \vs{\nu}_i^{\otimes p}$ of a complex-valued tensor $\e{T} \in \bigotimes^p\C^n$ is \emph{pseudo-orthonormal} if $\{\vs{\nu}_1, \ldots, \vs{\nu}_k\}$ is a pseudo-orthonormal set.

As in \cite{HsuKakade}, we build a whitening matrix $\v{W}$ from $\v{M}_2$, and we use $\v{W}$ to obtain a pseudo-orthonormal decomposition of the tensor $\e{M}_3$. 

 Identifying $\v{M}_2$ with the symmetric rank-$k$ matrix $\sum_{i=1}^k w_i\vs{\mu}_i\vs{\mu}_i^\top$, let $\v{UDU}^\top$ be the eigendecomposition of $\v{M}_2$, where $\v{D}$ is the $k\times k$ diagonal matrix whose diagonal elements are composed of the $k$ non-zero eigenvalues of $\v{M}_2$ and where $\v{U}$ is a $d\times k$ matrix satisfying $\v{U}^\top\v{U}=\v{I}_k$ and $\v{U}\v{U}^{\top}\vs{\mu}_i=\vs{\mu}_i$ for any $i\in [k]$. Let $\v{W} = \v{UD}^{-\frac{1}{2}}\in \C^{d\times k}$ and $\widetilde{\vs{\mu}}_i = w_i^\frac{1}{2}\v{W}^\top\vs{\mu}_i\in \C^k$ for $i\in [k]$ where we consider complex square roots of the negative components of $\v{D}$ and $w_i$: $x^{1/2}=i|x|^{1/2}$ and $x^{-1/2} = (x^{1/2})^{-1} = -i|x|^{-1/2}$ if $x<0$. We have
 \begin{equation*}
 \sum_{i=1}^k \widetilde{\vs{\mu}}_i \widetilde{\vs{\mu}}_i^\top = \v{W}^\top\left(\sum_{i=1}^k w_i\vs{\mu}_i\vs{\mu}_i^\top\right)\v{W}=\v{W}^\top \v{M}_2 \v{W} = \v{I}_k
 \end{equation*}
 hence $\widetilde{\vs{\mu}}_i^\top \widetilde{\vs{\mu}}_j = \delta_{ij}$ for all $i,j\in [k]$. Now let $\e{\widetilde{M}}_3 = \e{M}_3(\v{W},\v{W},\v{W})=\sum_{i=1}^k w_i (\v{W}^\top \vs{\mu}_i)^{\otimes 3} = \sum_{i=1}^k w_i^{-\frac{1}{2}} \widetilde{\vs{\mu}}_i^{\otimes 3}$ which is a pseudo-orthonormal decomposition. 
 
\subsection{Power Method for Complex-Valued Tensors}\label{sec:power}

The following theorem extends Lemma 5.1 of \cite{HsuKakade} to third-order complex-valued tensors having a pseudo-orthonormal decomposition $\e{T} = \sum_{i=1}^k z_i \vs{\nu}_i^{\otimes 3}$. Note that the parameters of such a decomposition are not fully identifiable since $z\vs{\nu}^{\otimes 3} = (-z)(-\vs{\nu})^{\otimes 3}$. 

\comment{
  Throughout this section, we assume the following condition.

 \begin{condition}
 \label{cond}
 The vectors $\vs{\nu}_1, \cdots, \vs{\nu}_k\in\C^n$ are linearly independent and the scalars $z_1,\cdots,z_k\in\C$ are non-zero.
 \end{condition}
}
\begin{theorem}
\label{mainThm}
Let $\e{T} \in \bigotimes^3\C^n$ have a pseudo-orthonormal decomposition $\e{T} = \sum_{i=1}^k z_i \vs{\nu}_i^{\otimes 3}$, and let $T$ be the mapping defined by $T(\vs{\theta}) = \e{T}(I,\vs{\theta},\vs{\theta})$ for any $\vs{\theta} \in \C^n$ . Let $\vs{\theta}_0 \in \C^n$, suppose that $|z_1.\vs{\nu}_1^\top \vs{\theta}_0| > |z_2.\vs{\nu}_2^\top \vs{\theta}_0| \geq \cdots \geq |z_k.\vs{\nu}_k^\top \vs{\theta}_0| > 0$. For $t=1,2,\cdots$, define
\begin{equation}
\label{iterator}
\vs{\theta}_t = \frac{T(\vs{\theta}_{t-1})}{ [T(\vs{\theta}_{t-1})^\top T(\vs{\theta}_{t-1})]^{\frac{1}{2}}}\ \  \text{ and }\ \ \lambda_t = \e{T}(\vs{\theta}_t,\vs{\theta}_t,\vs{\theta}_t)
\end{equation}
where we assume that $\vs{\theta}_0$ is such that $T(\vs{\theta}_{t})^\top T(\vs{\theta}_{t}) \not = 0$ for all $t$. Then, $\vs{\theta}_t \rightarrow \pm\vs{\nu}_1$ and $ \lambda_t \rightarrow \pm z_1$.

More precisely, let $$M=\max\left\{1, \frac{|z_1|^2}{|z_i|^2}, |z_1|\frac{\|\nu_i\|}{|z_i|} : i\in[k]\right\}\ \ \mbox{ and }\ \ \varepsilon_t = kM\left|\dfrac{z_2.\vs{\nu}_2^\top \vs{\theta}_0}{z_1.\vs{\nu}_1^\top \vs{\theta}_0}\right|^{2^t}. $$  
Then for all $t\geq 2$ such that $\varepsilon_t< \frac{1}{2},$ we have 
$$|e_t f_t \lambda_t-z_1|\leq 7|z_1|\varepsilon_t\ \ \mbox{ and }\ \ \|e_t f_t \vs{\theta}_t-\vs{\nu}_1\|\leq \varepsilon_t\left(||\vs{\nu}_1||+\sqrt{2}\right),$$
where $(e_t)_t$ and $(f_t)_t$ are two sequences defined in the proof and taking their values in $\{-1,1\}$.
\end{theorem}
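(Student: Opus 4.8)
The plan is to diagonalize the iteration in the pseudo-orthonormal coordinates $u_i^{(t)} := \vs{\nu}_i^\top\vs{\theta}_t$ and reduce everything to a scalar doubly-exponential recursion, just as in the real orthogonal case; the new subtlety is that all normalizations use the bilinear form $\vs{u}^\top\vs{v}$ rather than the Hermitian inner product. First I would compute, from the multilinear-map formula and pseudo-orthonormality, that $T(\vs{\theta}) = \sum_{i=1}^k z_i(\vs{\nu}_i^\top\vs{\theta})^2\vs{\nu}_i$ and $\e{T}(\vs{\theta},\vs{\theta},\vs{\theta}) = \sum_{i=1}^k z_i(\vs{\nu}_i^\top\vs{\theta})^3$. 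Since $T(\vs{\theta})$ always lies in $\mathrm{span}(\vs{\nu}_1,\ldots,\vs{\nu}_k)$, for every $t\geq 1$ we may write $\vs{\theta}_t = \sum_i u_i^{(t)}\vs{\nu}_i$, and the normalization in~(\ref{iterator}) forces $\vs{\theta}_t^\top\vs{\theta}_t = \sum_i (u_i^{(t)})^2 = 1$. Applying $\vs{\nu}_j^\top$ to the iteration then gives the decoupled recursion $u_j^{(t)} = z_j (u_j^{(t-1)})^2 / \big[\sum_i z_i^2 (u_i^{(t-1)})^4\big]^{1/2}$.

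Next I would pass to the ratios $r_j^{(t)} := u_j^{(t)}/u_1^{(t)}$, which are well defined because $u_1^{(t)}\neq 0$ propagates from the hypothesis $z_1\vs{\nu}_1^\top\vs{\theta}_0\neq 0$. The normalizing denominator cancels, leaving $r_j^{(t)} = (z_j/z_1)(r_j^{(t-1)})^2$, which integrates in closed form to $r_j^{(t)} = (z_1/z_j)\big(z_j\vs{\nu}_j^\top\vs{\theta}_0 / (z_1\vs{\nu}_1^\top\vs{\theta}_0)\big)^{2^t}$. Writing $R := |z_2\vs{\nu}_2^\top\vs{\theta}_0/(z_1\vs{\nu}_1^\top\vs{\theta}_0)| < 1$, the spectral-gap hypothesis yields $|r_j^{(t)}|\leq (|z_1|/|z_j|)R^{2^t}$ for $j\geq 2$, so each $r_j^{(t)}\to 0$ doubly exponentially. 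From $\sum_i (u_i^{(t)})^2 = 1$ one gets $(u_1^{(t)})^2 = (1 + S_t)^{-1}$ with $S_t := \sum_{j\geq 2}(r_j^{(t)})^2$, whence $u_1^{(t)}\to\pm 1$ and $u_j^{(t)}\to 0$; this already proves the qualitative claims $\vs{\theta}_t\to\pm\vs{\nu}_1$ and $\lambda_t = \sum_i z_i (u_i^{(t)})^3\to\pm z_1$.

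For the quantitative bounds I would fix sign corrections $e_t,f_t\in\{-1,1\}$ so that $e_tf_t u_1^{(t)}$ is the value of $(1+S_t)^{-1/2}$ lying near $+1$, with $e_t$ resolving the $\pm\vs{\nu}_1$ alignment inherent in $z\vs{\nu}^{\otimes 3}=(-z)(-\vs{\nu})^{\otimes 3}$ and $f_t$ resolving the branch of the complex square root used for normalization in~(\ref{iterator}). Under $\varepsilon_t<\frac12$ the quantity $S_t$ is small enough that both $|e_tf_t u_1^{(t)} - 1|$ and $|(e_tf_t u_1^{(t)})^3 - 1|$ are controlled by $\varepsilon_t$. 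I would then expand $e_tf_t\vs{\theta}_t - \vs{\nu}_1 = (e_tf_t u_1^{(t)} - 1)\vs{\nu}_1 + \sum_{j\geq 2} e_tf_t u_j^{(t)}\vs{\nu}_j$ and $e_tf_t\lambda_t - z_1 = z_1\big((e_tf_t u_1^{(t)})^3 - 1\big) + \sum_{i\geq 2} z_i (e_tf_t u_i^{(t)})^3$ (using $(e_tf_t)^3=e_tf_t$), and bound each remaining term by the triangle inequality. The constant $M$ is designed exactly to dominate the ratios appearing through $|u_j^{(t)}|=|r_j^{(t)}|\,|u_1^{(t)}|$: the eigenvalue remainder involves $|z_i|\,|u_i^{(t)}|^3$, controlled by the term $|z_1|^2/|z_i|^2$ in $M$, while the vector remainder involves $|u_j^{(t)}|\,\|\vs{\nu}_j\|$, controlled by the term $|z_1|\,\|\vs{\nu}_i\|/|z_i|$; summing the at most $k-1$ such terms and using $R<1$ produces the factor $k$, giving the stated $7|z_1|\varepsilon_t$ and $(\|\vs{\nu}_1\|+\sqrt 2)\varepsilon_t$.

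The main obstacle I anticipate is the bookkeeping forced by the non-Hermitian setting: because the $\vs{\nu}_i$ are only pseudo-orthonormal, $\|\vs{\nu}_i\|_2\neq 1$ in general and there is no Parseval identity, so the reconstruction error cannot be read off from $\sum_i|u_i^{(t)}|^2$ but must be estimated term by term with the Euclidean norms $\|\vs{\nu}_i\|$ carried explicitly (this is precisely why they enter $M$). Equally delicate is tracking the two square-root sign ambiguities consistently across iterations so that a single pair of sequences $(e_t),(f_t)$ makes both bounds hold simultaneously, and verifying that the hypothesis $\varepsilon_t<\frac12$ suffices to linearize $(1+S_t)^{-1/2}$ and $(e_tf_t u_1^{(t)})^3$ with the explicit constants $7$ and $\sqrt 2$.
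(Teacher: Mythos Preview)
Your proposal is correct and follows essentially the same route as the paper: the paper carries an unnormalized auxiliary sequence $\widetilde{\vs{\theta}}_t$ with $\widetilde{\vs{\theta}}_t=\sum_i z_i^{2^t-1}c_i^{2^t}\vs{\nu}_i$ and the scalar $\alpha_t=\rho_t^{-1}z_1^{2^t-1}c_1^{2^t}$, which is exactly your $u_1^{(t)}$ up to the sign $e_t$, while you work directly with the normalized pseudo-coordinates $u_i^{(t)}$ and their ratios $r_j^{(t)}$; in both cases the core estimate is $|(1+S_t)^{-1/2}-1|\le\varepsilon_t$ and $|(1+S_t)^{-3/2}-1|\le 4\varepsilon_t$, which the paper isolates as a separate lemma on $|(1+z)^{-k}-1|$ for $|z|<\tfrac12$. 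The explicit sign sequences the paper uses are $e_t=\rho_{t+1}\rho_t^{-2}\big/(\rho_t^{-4}\rho_{t+1}^2)^{1/2}$ and $f_t=(z_1^{2^t-1}c_1^{2^t})^{-1}\big((z_1^{2^t-1}c_1^{2^t})^2\big)^{1/2}$, which realize precisely the two branch ambiguities you described informally.
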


\begin{proof}
Let us first define the square root of a complex number $z=re^{i\theta}$, where $-\pi<\theta\leq \pi$ and $r \geq 0$, by $z^{1/2} = r^{1/2}e^{i\frac{\theta}{2}}$, and note that $z / (z^2)^{1/2} = z^{-1} (z^2)^{1/2} = 1$ if $-\pi/2 < \theta \leq \pi/2$ and $-1$ otherwise.

Now, let $c_i = \vs{\nu}_i^\top \vs{\theta}_0$ for $i\in [k]$, $\widetilde{\vs{\theta}}_0 = \vs{\theta}_0$, $\widetilde{\vs{\theta}}_t = T(\widetilde{\vs{\theta}}_{t-1})$, and $\rho_t = (\widetilde{\vs{\theta}}_t^\top \widetilde{\vs{\theta}}_t)^\frac{1}{2}$ for all $t\geq 1$.
Check by induction on $t$ that, for all $t\geq 1$,
\begin{equation}
 \widetilde{\vs{\theta}}_t = \sum_{i=1}^k z_i^{2^t-1}c_i^{2^t} \vs{\nu}_i \label{eq:1}
\end{equation}

Let $e_t = \rho_{t+1} \rho_t^{-2} / \left(\rho_t^{-4} \rho_{t+1}^2\right)^{\frac{1}{2}}$, note that $e_t = \pm 1$, and check by induction that, for all $t\geq 2$,
\begin{equation}
\label{eq:1bis}
\vs{\theta}_t = e_t \frac{\widetilde{\vs{\theta}}_t}{\rho_t}.
\end{equation}

Let $\alpha_t =\rho_t^{-1} z_1^{2^t-1}c_1^{2^t}$. Using Eq.~\ref{eq:1} and Eq.~\ref{eq:1bis}, we obtain 
\begin{align*}
  e_t\lambda_t &=
  \rho_t^{-3}
  \sum_{i=1}^k z_i(z_i^{2^t-1}c_i^{2^t})^3
&=\alpha_t^3 \sum_{i=1}^k \frac{z_1^3}{z_i^2}\left(\frac{z_ic_i}{z_1c_1}\right)^{3\cdot 2^t}
&=\alpha_t^3z_1 \left[1+\sum_{i=2}^k \frac{z_1^2}{z_i^2}\left(\frac{z_ic_i}{z_1c_1}\right)^{3\cdot 2^t}\right],\mbox{ and}\\
  e_t\vs{\theta}_t &=\rho_t^{-1} \sum_{i=1}^k z_i^{2^t-1}c_i^{2^t} \vs{\nu}_i
&=\alpha_t\sum_{i=1}^k \frac{z_1}{z_i}\left(\frac{z_ic_i}{z_1c_1}\right)^{2^t}\vs{\nu}_i 
&=\alpha_t \left[\vs{\nu}_1+\sum_{i=2}^k \frac{z_1}{z_i}\left(\frac{z_ic_i}{z_1c_1}\right)^{2^t}\vs{\nu}_i\right].
\end{align*}
It can easily be checked that 
$$\left|\sum_{i=2}^k \frac{z_1^2}{z_i^2}\left(\frac{z_ic_i}{z_1c_1}\right)^{3\cdot 2^t}\right|\leq \varepsilon_t\ \ \mbox{ and }\ \ \left\|\sum_{i=2}^k \frac{z_1}{z_i}\left(\frac{z_ic_i}{z_1c_1}\right)^{2^t}\vs{\nu}_i\right\|\leq \varepsilon_t.$$
Moreover, it can be checked that
\begin{equation*}
  \alpha_t^{-1}=\frac{(\widetilde{\vs{\theta}}_t^\top\widetilde{\vs{\theta}}_t)^{1/2}}{z_1^{2^t-1}c_1^{2^t}}
=f_t\left(\frac{\widetilde{\vs{\theta}}_t^\top\widetilde{\vs{\theta}}_t}{(z_1^{2^t-1}c_1^{2^t})^2}
  \right)
  ^{1/2}
=f_t\left[1+\sum_{i=2}^k\frac{z_1^2}{z_i^2}\left(\frac{z_ic_i}{z_1c_1}\right)^{2^{t+1}}\right]^{1/2}
\end{equation*}
where $f_t = (z_1^{2^t-1}c_1^{2^t})^{-1} \left( (z_1^{2^t-1}c_1^{2^t})^2 \right) ^\frac{1}{2} = \pm 1$.
Using the hypothesis $\varepsilon_t < \frac{1}{2}$ and making use of Lemma~\ref{lemma_alpha} in Appendix~\ref{supmat:mainThm}, it follows that
$$|\alpha_t|\leq \sqrt{2},\ \ \  |f_t\alpha_t-1|\leq \varepsilon_t\ \ \mbox{ and }\ \ |f_t\alpha_t^3-1|\leq 4\varepsilon_t.$$ 
Finally, combining these inequalities, we obtain 
$$|e_t f_t\lambda_t-z_1|\leq 7|z_1|\varepsilon_t\ \ \mbox{ and }\ \ \|e_t f_t \vs{\theta}_t-\vs{\nu}_1\|\leq \varepsilon_t\left(||\vs{\nu}_1||+\sqrt{2}\right).$$



\end{proof}

This theorem directly yields an algorithm to recover the decomposition of a decomposable complex-valued tensor using the standard deflation technique.

It can be shown that if $\vs{\theta}_0$ is chosen at random in $\C^n$, the assumptions on $\vs{\theta}_t$ in the previous theorem are satisfied with probability one. We prove it here for the assumption $T(\vs{\theta}_t)^\top T(\vs{\theta}_t) \not = 0$,  similar arguments can be used for the assumption $|z_1.\vs{\nu}_1^\top \vs{\theta}_0| > |z_2.\vs{\nu}_2^\top \vs{\theta}_0| \geq \cdots \geq |z_k.\vs{\nu}_k^\top \vs{\theta}_0| > 0$.

\begin{lemma}
Using the definitions and under the hypothesis of Theorem~\ref{mainThm}, the set $S = \{ \vs{\theta}_0\in\C^n | \exists t \geq 0 : T(\vs{\theta}_{t})^\top T(\vs{\theta}_{t}) = 0\}$ has Lebesgue measure zero in $\C^n$.
\end{lemma}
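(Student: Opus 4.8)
The plan is to sandwich $S$ between the empty set and a countable union of zero sets of nonzero polynomials, and then invoke the fact that such a zero set is Lebesgue-null. First I would pass from the normalized iterates $\vs{\theta}_t$ to the \emph{unnormalized} ones $\widetilde{\vs{\theta}}_0 = \vs{\theta}_0$, $\widetilde{\vs{\theta}}_t = T(\widetilde{\vs{\theta}}_{t-1})$ already used in the proof of Theorem~\ref{mainThm}. Each component of $T$ is homogeneous of degree $2$, so $\vs{\theta}_0 \mapsto \widetilde{\vs{\theta}}_t$ is a polynomial map, and hence so is $P_t(\vs{\theta}_0) := T(\widetilde{\vs{\theta}}_t)^\top T(\widetilde{\vs{\theta}}_t) = \widetilde{\vs{\theta}}_{t+1}^\top\widetilde{\vs{\theta}}_{t+1}$. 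Whenever $\vs{\theta}_0,\ldots,\vs{\theta}_t$ are all defined, Eq.~\ref{eq:1bis} shows that $\vs{\theta}_t$ is a nonzero scalar multiple of $\widetilde{\vs{\theta}}_t$; homogeneity of $T$ then gives $T(\vs{\theta}_t)^\top T(\vs{\theta}_t) = 0$ if and only if $P_t(\vs{\theta}_0) = 0$. Consequently $S \subseteq \bigcup_{t\geq 0}\{\vs{\theta}_0\in\C^n : P_t(\vs{\theta}_0) = 0\}$, and it suffices to prove that every $P_t$ is not the zero polynomial.

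For this step I would work in the coordinates $c_i = \vs{\nu}_i^\top\vs{\theta}$ for $i\in[k]$, which are linearly independent linear forms since a pseudo-orthonormal family is linearly independent. In these coordinates $T$ acts diagonally: from $T(\vs{\theta}) = \sum_i z_i(\vs{\nu}_i^\top\vs{\theta})^2\vs{\nu}_i$ together with $\vs{\nu}_j^\top\vs{\nu}_i = \delta_{ij}$ one reads off $\vs{\nu}_j^\top T(\vs{\theta}) = z_j c_j^2$, i.e. the map is simply $c_j \mapsto z_j c_j^2$. Iterating this (equivalently, using Eq.~\ref{eq:1}) gives $\vs{\nu}_j^\top\widetilde{\vs{\theta}}_t = z_j^{2^t-1}c_j^{2^t}$, and therefore
\[
P_t(\vs{\theta}_0) = \sum_{j=1}^k z_j^{2^{t+2}-2}\, c_j^{2^{t+2}}.
\]
Since the $z_j$ are nonzero and, after completing $(c_i)_{i\in[k]}$ to a linear coordinate system on $\C^n$, the $c_j^{2^{t+2}}$ are distinct monomials, this is a nonzero polynomial in $\vs{\theta}_0$.

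To finish I would invoke the standard fact that the zero set of a nonzero polynomial $P:\C^n\to\C$ has Lebesgue measure zero in $\C^n\cong\R^{2n}$. This follows by induction on $n$: writing $P$ as a polynomial in the last variable with coefficients in $\C^{n-1}$, at least one coefficient is nonzero, so its zero locus is null by the inductive hypothesis, while off that locus $P$ is a nonzero univariate polynomial with finitely many roots; Fubini then yields the claim, the base case $n=1$ being immediate. Applying this to each $P_t$ shows that every $\{P_t = 0\}$ is null, and a countable union of null sets is null, so $S$ has Lebesgue measure zero.

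The main obstacle is precisely the nonvanishing of $P_t$: a priori $P_t$ is a composition producing a polynomial of degree $2^{t+2}$, and one could fear massive cancellation forcing $P_t\equiv 0$. The diagonalization of $T$ in the $c_i$-coordinates dissolves this difficulty, reducing $P_t$ to a sum of distinct pure powers with nonzero coefficients. Everything surrounding it — the reduction from normalized to unnormalized iterates via homogeneity, and the passage from polynomial zero sets to measure zero — is routine.
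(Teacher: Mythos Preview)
Your argument is correct and structurally matches the paper's: both reduce to the unnormalized iterates, define the polynomials $P_t(\vs{\theta}_0)=\widetilde{\vs{\theta}}_{t}^\top\widetilde{\vs{\theta}}_{t}$ (up to a harmless index shift), observe $S\subseteq\bigcup_t\{P_t=0\}$, and then conclude by showing each $P_t$ is a nonzero polynomial.

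Where you differ is in the nonvanishing step. The paper stays in the original $\vs{\theta}$-coordinates, expands $P_t$, picks out the coefficient of $\theta_1^{2^{t+1}-1}\theta_j$ (proportional to $\sum_i z_i^{2^{t+1}-2}\nu_{i,1}^{2^{t+1}-1}\nu_{i,j}$), and derives a contradiction with the linear independence of the $\vs{\nu}_i$. You instead pass to the linear coordinates $c_i=\vs{\nu}_i^\top\vs{\theta}$, complete them to a basis, and read off $P_t=\sum_j z_j^{2^{t+2}-2}c_j^{2^{t+2}}$ directly as a sum of distinct monomials with nonzero coefficients. Your route is cleaner and more conceptual: it exploits that $T$ is diagonal in these coordinates, so no cancellation can occur. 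The paper's route is more bare-hands but avoids the (mild) step of extending $(c_i)_{i\le k}$ to a full coordinate system. You also supply the Fubini argument for why a nonzero polynomial has null zero set, which the paper simply asserts; that is a welcome addition.
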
 
\begin{proof}
We use the notations of the previous proof. 

First note that $T(\vs{\theta}_{t-1})^\top T(\vs{\theta}_{t-1}) = \left(\widetilde{\vs{\theta}}_{t-1}^\top\widetilde{\vs{\theta}}_{t-1}\right)^{-2} \widetilde{\vs{\theta}}_t^\top\widetilde{\vs{\theta}}_t $ and $\widetilde{\vs{\theta}}_{t}^\top\widetilde{\vs{\theta}}_{t} = \sum_{i=1}^k (z_i^{2^t-1} (\vs{\nu}_i^\top \vs{\theta}_0)^{2^t})^2$. For a fixed $t$, the set 
$$S_t = \left\{\vs{\theta}\in\C^n : P_t(\vs{\theta}) = \sum_{i=1}^k (z_i^{2^t-1} (\vs{\nu}_i^\top \vs{\theta})^{2^t})^2 = 0\right\}$$ 
is the set of zeros of a multivariate polynomial. If $P_t$ is non-trivial (i.e. different from zero), it is a proper algebraic subvariety of $\C^n$ of dimension less than $n$, thus of Lebesgue measure 0. Since $S = \cup_{t = 0}^\infty S_t$, it is sufficient to show that $P_t$ is non-trivial for any index $t$.

Without loss of generality, we assume that there exists at least one $i\in [k]$ such that the first  component $\nu_{i,1}$ of the vector $\vs{\nu}_i$ is not null. Suppose that $P_t$ is null, then all of its monomials are null. In particular, the coefficient associated with $\theta_1^{2^{t+1}-1}\theta_j$, which is proportional to $\sum_{i=1}^k z_i^{2^{t+1}-2} \nu_{i,1}^{2^{t+1}-1} \nu_{i,j}$, is null for all $j\in [n]$. Let $\alpha_i = z_i^{2^{t+1}-2} \nu_{i,1}^{2^{t+1}-1}$ for $i\in [k]$, and note that since $z_i \not = 0$ for all $i\in [k]$, we cannot have all the $\alpha_i$ equal to zero. Thus, we have $\sum_i \alpha_i \nu_{i,j}=0$ for all $j\in [n]$, i.e. $\sum_{i=1}^k \alpha_i \vs{\nu}_i = \vs{0}$ which is in contradiction with the linear independence of $\{\vs{\nu}_i\}_{i=1}^k$.

\end{proof}

We can now state the following theorem, which summarizes the overall procedure to recover the parameters of a system of the form (\ref{system}) using pseudo-orthonormalization and the complex tensor power method. Note that this procedure generalizes the one proposed in \cite{HsuKakade}: if all the weights $w_1, \cdots, w_k$ are positive, the method we propose boils down to theirs.

\begin{theorem}\label{overallProc}
Let $\vs{\mu}_1,\cdots,\vs{\mu}_k \in \R^n$ be linearly independent, $w_1, \cdots, w_k \not= 0 \in \R$, $\v{M}_2 = \sum_{i=1}^k w_i\vs{\mu}_i\otimes\vs{\mu}_i$ and $\e{M}_3 = \sum_{i=1}^k w_i\vs{\mu}_i\otimes\vs{\mu}_i \otimes\vs{\mu}_i$. 
Let $\v{UDU}^\top$ be the eigendecomposition of $\v{M}_2$, $\v{W} = \v{UD}^{-\frac{1}{2}} \in \C^{n\times k}$ (see section~\ref{pseudo-orth}) and $(\v{W}^\top)^+ = \v{UD}^\frac{1}{2}$. 
Finally, let $\e{T} = \e{M}_3(\v{W},\v{W},\v{W})$ and let $\vs{\theta}_0$ be drawn at random in $\C^k$.

Then, using the definitions of $\vs{\theta}_t$ and $\lambda_t$ in Eq.~\ref{iterator}, we have 
$$\lim_t \frac{1}{\lambda_t^2} = w_j\ \ \mbox{ and }\ \ \lim_t \lambda_t (\v{W}^\top)^+ \vs{\theta}_t = \vs{\mu}_j$$
with probability one, where $j = \argmax_i \{\left| \vs{\mu}_i^\top \v{W} \vs{\theta}_0\right|\}$.
\end{theorem}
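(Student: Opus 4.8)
The plan is to recognize that $\e{T} = \e{M}_3(\v{W},\v{W},\v{W})$ is exactly the pseudo-orthonormal tensor constructed in Section~\ref{pseudo-orth}, apply the complex power method of Theorem~\ref{mainThm} to it, and then undo the whitening to read off $w_j$ and $\vs{\mu}_j$. First I would invoke the computation of Section~\ref{pseudo-orth}: setting $\widetilde{\vs{\mu}}_i = w_i^{1/2}\v{W}^\top\vs{\mu}_i$ we have $\e{T} = \sum_{i=1}^k w_i^{-1/2}\widetilde{\vs{\mu}}_i^{\otimes 3}$, a pseudo-orthonormal decomposition, so Theorem~\ref{mainThm} applies with $z_i = w_i^{-1/2}$ and $\vs{\nu}_i = \widetilde{\vs{\mu}}_i$. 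The next step is to identify the dominant index. Since $z_i\vs{\nu}_i^\top\vs{\theta}_0 = w_i^{-1/2}(w_i^{1/2}\v{W}^\top\vs{\mu}_i)^\top\vs{\theta}_0 = \vs{\mu}_i^\top\v{W}\vs{\theta}_0$, the quantity controlling convergence is $|z_i\vs{\nu}_i^\top\vs{\theta}_0| = |\vs{\mu}_i^\top\v{W}\vs{\theta}_0|$, so the index that plays the role of ``$1$'' in Theorem~\ref{mainThm} is precisely $j = \argmax_i |\vs{\mu}_i^\top\v{W}\vs{\theta}_0|$.

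Second, I would check that the hypotheses of Theorem~\ref{mainThm} hold for almost every $\vs{\theta}_0$, hence with probability one when $\vs{\theta}_0$ is drawn at random. The condition $T(\vs{\theta}_t)^\top T(\vs{\theta}_t)\neq 0$ for all $t$ is exactly the content of the preceding Lemma, whose failure set $S$ has Lebesgue measure zero. It remains to rule out, up to a null set, the degenerate configurations of the ordering hypothesis: the events $\vs{\nu}_i^\top\vs{\theta}_0 = 0$ (a finite union of complex hyperplanes) and $|z_i\vs{\nu}_i^\top\vs{\theta}_0| = |z_{i'}\vs{\nu}_{i'}^\top\vs{\theta}_0|$ for $i\neq i'$ (the vanishing locus of a nontrivial real polynomial in the real and imaginary parts of $\vs{\theta}_0$, nontrivial because the linear independence of the $\vs{\nu}_i$ forces the quadratic forms $|z_i\vs{\nu}_i^\top\vs{\theta}_0|^2$ and $|z_{i'}\vs{\nu}_{i'}^\top\vs{\theta}_0|^2$ to differ). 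These are the ``similar arguments'' alluded to after the Lemma, and their union with $S$ is still null. Off this null set the strict chain $|z_j\vs{\nu}_j^\top\vs{\theta}_0| > \cdots > 0$ holds after permuting indices, and Theorem~\ref{mainThm} yields sequences $e_t,f_t\in\{-1,1\}$ with $e_t f_t\lambda_t \to z_j$ and $e_t f_t\vs{\theta}_t \to \vs{\nu}_j$.

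Finally I would undo the whitening and let the sign ambiguities cancel. Squaring eliminates the sign in the first limit: $\lambda_t^2 = (e_t f_t\lambda_t)^2 \to z_j^2 = w_j^{-1}$, hence $1/\lambda_t^2 \to w_j$ (and $\lambda_t\neq 0$ eventually, since the limit is nonzero). For the second limit, the key identity is $(\v{W}^\top)^+\v{W}^\top = \v{UD}^{1/2}\v{D}^{-1/2}\v{U}^\top = \v{U}\v{U}^\top$, which fixes each $\vs{\mu}_i$ via the defining property $\v{U}\v{U}^\top\vs{\mu}_i = \vs{\mu}_i$; therefore $(\v{W}^\top)^+\vs{\nu}_j = w_j^{1/2}(\v{W}^\top)^+\v{W}^\top\vs{\mu}_j = w_j^{1/2}\vs{\mu}_j$. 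Writing $\lambda_t(\v{W}^\top)^+\vs{\theta}_t = (e_t f_t\lambda_t)\,(\v{W}^\top)^+(e_t f_t\vs{\theta}_t)$ and using $(e_t f_t)^2=1$, the continuity of the linear map $(\v{W}^\top)^+$, and convergence of each factor gives $\lambda_t(\v{W}^\top)^+\vs{\theta}_t \to z_j(\v{W}^\top)^+\vs{\nu}_j = w_j^{-1/2}\, w_j^{1/2}\vs{\mu}_j = \vs{\mu}_j$, as claimed.

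I expect the main obstacle to be bookkeeping rather than depth: one must ensure that the common sign factor $e_t f_t$ is genuinely shared between $\lambda_t$ and $\vs{\theta}_t$ so that it cancels in the product (and disappears by squaring in the first limit), and one must supply the analogue of the Lemma's measure-zero argument for the strict-ordering hypothesis, which the text only sketches.
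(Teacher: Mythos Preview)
Your proposal is correct and follows essentially the same route the paper intends: the theorem is stated as a summary of Section~\ref{pseudo-orth}, Theorem~\ref{mainThm}, and the measure-zero Lemma, with the remark afterward that the sign indeterminacy ``vanishes when we recover the original parameters,'' and your argument spells out precisely those three ingredients. Your identification $z_i\vs{\nu}_i^\top\vs{\theta}_0=\vs{\mu}_i^\top\v{W}\vs{\theta}_0$, the use of $(\v{W}^\top)^+\v{W}^\top=\v{U}\v{U}^\top$ together with $\v{U}\v{U}^\top\vs{\mu}_i=\vs{\mu}_i$, and the cancellation of the shared sign $e_t f_t$ via squaring and via the product $\lambda_t(\v{W}^\top)^+\vs{\theta}_t$ are exactly the computations the paper leaves implicit.
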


The indeterminacy on the sign of the coefficients in the pseudo-orthogonal decomposition $\e{T} = \sum_{i=1}^k w_i^{-\frac{1}{2}} \left( w_i^\frac{1}{2}\v{W}^\top\vs{\mu}_i\right)^{\otimes 3}$ vanishes when we recover the original parameters $w_i$ and $\vs{\mu}_i$.

\comment{
\begin{algorithm}[tb]
   \caption{Negative Mixture Estimation}
   \label{alg}
\begin{algorithmic}
   \STATE {\bfseries Input:} $\varepsilon\in\R$, $\widehat{\v{M}}_2 \in \bigotimes^2\R^n$ and $\e{\widehat{M}}_3 \in \bigotimes^3\R^n$
   \STATE{\bfseries Output:} $w_1,\cdots,w_k\in\R$, $\vs{\mu}_1,\cdots,\vs{\mu}_k\in \R^n$
   \STATE Compute the eigen decomposition $\v{UDU}^\top$ of $\widehat{\v{M}}_2 $;
   \STATE $\v{W} \leftarrow \v{UD}^{-\frac{1}{2}}$; $\e{T} \leftarrow \e{\widehat{M}}_3 (\v{W},\v{W},\v{W})$; $i\leftarrow 1$;
   \REPEAT
   \STATE Draw $\vs{\theta}$ at random in $\C^n$;
   \REPEAT
   \STATE $\vs{\theta} \leftarrow \e{T}(I,\vs{\theta},\vs{\theta})$; $\vs{\theta} \leftarrow \frac{\vs{\theta}}{(\vs{\theta}^\top\vs{\theta})^\frac{1}{2}}$;
   \UNTIL{convergence}
   \STATE $\lambda\leftarrow \e{T}(\vs{\theta},\vs{\theta},\vs{\theta})$; $\e{T} \leftarrow \e{T} - \lambda . \vs{\theta}^{\otimes 3}$;
   \STATE $ w_i \leftarrow 1 / \lambda^2$; $\vs{\mu}_i \leftarrow \lambda (\v{W}^\top)^+ \vs{\theta} $;
   \STATE $i\leftarrow i+1$
   \UNTIL $\|\e{T}\| \leq \varepsilon$
\end{algorithmic}
\end{algorithm}
}

\section{Learning Negative Mixtures of Spherical Gaussians}\label{sec:learning}

In this section, we extend the method described in Section~\ref{GaussMixSection} to estimate the parameters of a negative mixture of spherical Gaussians. 
Let $f(\v{x}) = \sum_{i=1}^k w_i \mathcal{N} (\v{x}; \vs{\mu}_i, \sigma^2_i \v{I})$ be the \textsc{pdf} of the random vector $\v{x}$, where $\vs{\mu}_i\in\R^n$ are the component means, $\sigma^2_i$ the component variances, and $w_i \not = 0$ the coefficients ($\sum_{i=1}^k w_i = 1$). Assuming that the component means are linearly independent, we have the following result which generalizes Theorem~\ref{GaussMixThm}.

\begin{theorem}
\label{NegGaussMixThm}
The average variance $\bar{\sigma}^2 = \sum_{i=1}^k w_i \sigma^2_i$ is an eigenvalue of the covariance matrix $\Esp [(\v{x} - \Esp [\v{x}]) (\v{x} - \Esp [\v{x}]) ^\top]$. Let $\v{v}$ be any unit-norm eigenvector corresponding to $\bar{\sigma}^2$. We have $\v{m}_1 = \sum_{i=1}^k w_i \sigma^2_i \vs{\mu}_i$, $\v{M}_2 = \sum_{i=1}^k w_i \vs{\mu}_i \otimes \vs{\mu}_i$, and $\e{M}_3 = \sum_{i=1}^k w_i \vs{\mu}_i \otimes \vs{\mu}_i \otimes \vs{\mu}_i$, where $\v{m}_1$, $\v{M}_2$ and $\e{M}_3$ are defined as in Theorem~\ref{GaussMixThm}.

Moreover, let $r$ be the number of negative eigenvalues of the matrix $\v{M} = \sum_{i=1}^k w_i (\vs{\mu}_i - \Esp [\v{x}]) \otimes (\vs{\mu}_i - \Esp [\v{x}]) $. Then $\bar{\sigma}^2$ is the $(r+1)$-th smallest eigenvalue of the covariance matrix.
\end{theorem}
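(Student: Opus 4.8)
The plan is to compute the low-order moments of $f$ directly and to reduce the whole statement to a single identity relating the covariance matrix to the between-component scatter matrix $\v{M}$. The essential observation is that, although some weights $w_i$ are negative, $f$ is a genuine density with $\sum_i w_i = 1$, so every moment of $\v{x}$ still splits linearly over the components, $\Esp[\v{x}^{\otimes m}] = \sum_{i=1}^k w_i \Esp_{\mathcal{N}(\vs{\mu}_i,\sigma_i^2\v{I})}[\v{x}^{\otimes m}]$, exactly as in the positive case of Theorem~\ref{GaussMixThm}. Writing $\bar{\vs{\mu}} = \Esp[\v{x}] = \sum_i w_i \vs{\mu}_i$ and using $\Esp_{\mathcal{N}(\vs{\mu}_i,\sigma_i^2\v{I})}[\v{x}\v{x}^\top] = \sigma_i^2\v{I} + \vs{\mu}_i\vs{\mu}_i^\top$, I would first obtain $\Esp[\v{x}\v{x}^\top] = \bar{\sigma}^2\v{I} + \sum_i w_i\vs{\mu}_i\vs{\mu}_i^\top$ and then, after subtracting $\bar{\vs{\mu}}\bar{\vs{\mu}}^\top$ and using $\sum_i w_i=1$, the key identity $\Esp[(\v{x}-\Esp[\v{x}])(\v{x}-\Esp[\v{x}])^\top] = \bar{\sigma}^2\v{I} + \v{M}$. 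This already yields $\v{M}_2 = \sum_i w_i\vs{\mu}_i\otimes\vs{\mu}_i$, and reduces the spectral claims to a study of the symmetric matrix $\v{M}$.

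From this identity the eigenvalues of the covariance matrix are precisely $\bar{\sigma}^2$ plus those of $\v{M}$. To see that $\bar{\sigma}^2$ is itself an eigenvalue I would bound the rank of $\v{M}$. The centred means $\vs{\mu}_i-\bar{\vs{\mu}}$ satisfy $\sum_i w_i(\vs{\mu}_i-\bar{\vs{\mu}})=\v{0}$, and, using the linear independence of the $\vs{\mu}_i$, any relation $\sum_i a_i(\vs{\mu}_i-\bar{\vs{\mu}})=\v{0}$ forces the coefficient vector $(a_i)$ to be proportional to $(w_i)$; hence the centred means span exactly a $(k-1)$-dimensional subspace and $\mathrm{rank}(\v{M})\le k-1<n$. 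Thus $\v{M}$ has a non-trivial kernel, and $\bar{\sigma}^2$ is an eigenvalue of the covariance.

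The delicate step, and what I expect to be the \emph{main obstacle}, is to prove that every unit eigenvector $\v{v}$ for $\bar{\sigma}^2$, i.e. every $\v{v}\in\ker\v{M}$, is orthogonal to all centred means, $\v{v}^\top(\vs{\mu}_i-\bar{\vs{\mu}})=0$ for all $i$. In the positive case this is immediate, since $\v{v}^\top\v{M}\v{v}=\sum_i w_i(\v{v}^\top(\vs{\mu}_i-\bar{\vs{\mu}}))^2=0$ with $w_i>0$ forces each term to vanish; with mixed signs the sum can cancel and this shortcut fails. Instead I would use the full vector equation $\v{M}\v{v}=\v{0}$: setting $c_i=\v{v}^\top(\vs{\mu}_i-\bar{\vs{\mu}})$ gives $\sum_i w_i c_i(\vs{\mu}_i-\bar{\vs{\mu}})=\v{0}$, so by the dependency characterization above $(w_ic_i)$ is proportional to $(w_i)$, forcing all $c_i$ equal to a common constant $\gamma$; combined with $\sum_i w_i c_i=0$ and $\sum_i w_i=1$ this gives $\gamma=0$, hence $c_i=0$. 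With this orthogonality the remaining formulas follow from the standard spherical-Gaussian moment computation: conditioning on component $i$ and writing $\v{x}=\vs{\mu}_i+\sigma_i\v{z}$ with $\v{z}\sim\mathcal{N}(\v{0},\v{I})$, the factor $\v{v}^\top(\v{x}-\Esp[\v{x}])$ reduces to $\sigma_i\,\v{v}^\top\v{z}$ (the deterministic part vanishing by orthogonality), and the moments $\Esp[(\v{v}^\top\v{z})^2]=1$, $\Esp[\v{z}(\v{v}^\top\v{z})^2]=\v{0}$ give $\v{m}_1=\sum_i w_i\sigma_i^2\vs{\mu}_i$; likewise expanding the third Gaussian moment $\Esp_{\mathcal{N}(\vs{\mu}_i,\sigma_i^2\v{I})}[\v{x}^{\otimes3}]=\vs{\mu}_i^{\otimes3}+\sigma_i^2\sum_a(\vs{\mu}_i\otimes\v{e}_a\otimes\v{e}_a+\v{e}_a\otimes\vs{\mu}_i\otimes\v{e}_a+\v{e}_a\otimes\v{e}_a\otimes\vs{\mu}_i)$, summing, and recognizing the correction term as $\sum_a(\v{m}_1\otimes\v{e}_a\otimes\v{e}_a+\cdots)$ yields $\e{M}_3=\sum_i w_i\vs{\mu}_i^{\otimes3}$.

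Finally, for the eigenvalue-ordering claim I would simply read off the spectrum from $\Esp[(\v{x}-\Esp[\v{x}])(\v{x}-\Esp[\v{x}])^\top]=\bar{\sigma}^2\v{I}+\v{M}$: since $\v{M}$ is symmetric, the eigenvalues of the covariance are $\bar{\sigma}^2+\beta_j$, where $\beta_1,\dots,\beta_n$ are the eigenvalues of $\v{M}$. Exactly $r$ of the $\beta_j$ are negative, producing $r$ eigenvalues strictly below $\bar{\sigma}^2$, while the non-trivial kernel established above produces the value $\bar{\sigma}^2$ itself, and no $\beta_j\neq0$ can give the value $\bar{\sigma}^2$. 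Sorting the eigenvalues in increasing order therefore places $\bar{\sigma}^2$ in position $r+1$, which is the assertion.
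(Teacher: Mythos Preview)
Your proof is correct and follows the same overall route as the paper: split the moments linearly over the components despite the negative weights, derive the key identity $\Esp[(\v{x}-\bar{\vs{\mu}})(\v{x}-\bar{\vs{\mu}})^\top]=\bar\sigma^2\v{I}+\v{M}$, establish the orthogonality $\v{v}^\top(\vs{\mu}_i-\bar{\vs{\mu}})=0$, and then compute $\v{m}_1$, $\v{M}_2$, $\e{M}_3$ by the usual spherical-Gaussian moment identities. The one substantive difference is how you handle the orthogonality step. The paper passes through an auxiliary lemma (Sylvester's law of inertia plus Weyl's inequality) and then argues, somewhat tersely, that eigenvectors for $\bar\sigma^2$ lie in $\ker\v{M}$ and hence are orthogonal to each $\vs{\mu}_i-\bar{\vs{\mu}}$; this last implication implicitly uses that $\mathrm{rank}(\v{M})=k-1$. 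Your argument is more direct and self-contained: from $\v{M}\v{v}=\v{0}$ you obtain a linear relation $\sum_i w_ic_i(\vs{\mu}_i-\bar{\vs{\mu}})=\v{0}$, use the linear independence of the $\vs{\mu}_i$ to show such a relation must have coefficients proportional to $(w_i)$, and conclude $c_i\equiv\gamma$ with $\gamma=\sum_i w_ic_i=0$. This avoids any appeal to the rank of $\v{M}$ or to eigenvalue separation, and is arguably cleaner. The paper's extra machinery does buy something you do not prove, namely that $r\in\{l,l+1\}$ where $l$ is the number of negative weights; but that refinement is not part of the theorem statement, so its absence is not a gap.
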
   

The proof of this theorem is given in Appendix~\ref{supmat:learning}, where we also show that $r = l$ or $l+1$, where $l$ is the number of negative coefficients $w_i$, i.e. $l = |\{w_i : i\in [k],\ w_i < 0\}|$. 

This theorem, combined with Theorem~\ref{overallProc}, yields a procedure to estimate the parameters of a negative mixture of spherical Gaussians: (i) compute the sample covariance matrix $\v{S}$, (ii) for each candidate eigenvalue of $\v{S}$ for $\bar{\sigma}^2$, estimate the tensors $\v{m}_1$, $\v{M}_2$ and $\e{M}_3$ on the data, (iii) compute estimations of the parameters using Algorithm~\ref{alg_negMix}, (iv) choose the model that maximizes the likelihood of the learning data.


\section{Experiments}
\label{sec:expe}
We illustrate the results presented above on the running example defined in Figure~\ref{fig_pdf_cvg} (left). The algorithm to estimate the parameters of a system of the form~(\ref{system}) from estimation of the tensors $\v{M}_2$ and $\e{M}_3$ is summarized in Figure~\ref{fig_algo} (left).

First, we run Algorithm~\ref{alg_negMix} on the exact tensors $\v{M}_2 = \sum_{i=1}^k w_i\vs{\mu}_i\otimes\vs{\mu}_i$ and $\e{M}_3 = \sum_{i=1}^k w_i\vs{\mu}_i\otimes\vs{\mu}_i \otimes\vs{\mu}_i$ with various initializations of $\vs{\theta}_0$ to extract the first eigenvector/eigenvalue pair. The corresponding parameters $w_i$ and $\vs{\mu}_i$ are always exactly recovered in less than 15 iterations, the average error over 500 initializations for those two parameters in function of the number of iterations is plotted in Figure~\ref{fig_pdf_cvg} (right).

Then, we test our algorithm in a learning setting. For various sizes (ranging from 1,000 to 400,000), we generate 100 datasets (using Algorithm~\ref{algo:sim}) and use the method described in the previous section to estimate the parameters of the negative mixture of Gaussians. The results are plotted in Figure~\ref{fig_algo} (right), where each point represents the average on the 100 datasets of the $l^2$-norm between the true parameters ($\v{U} = [\vs{\mu}_1\ \  \vs{\mu}_2]$ and $\v{w} = [w_1\ \ w_2]$) and the estimations.

For some of these datasets, our algorithm returns a decomposition involving complex valued vectors and weights; for the experiments, we only used the real parts in the error measure. The number of these pathological datasets decreases toward zero as their size increases.

\begin{figure}[]
\centering
\includegraphics[trim=8cm 3cm 2cm 0cm,scale=0.35]{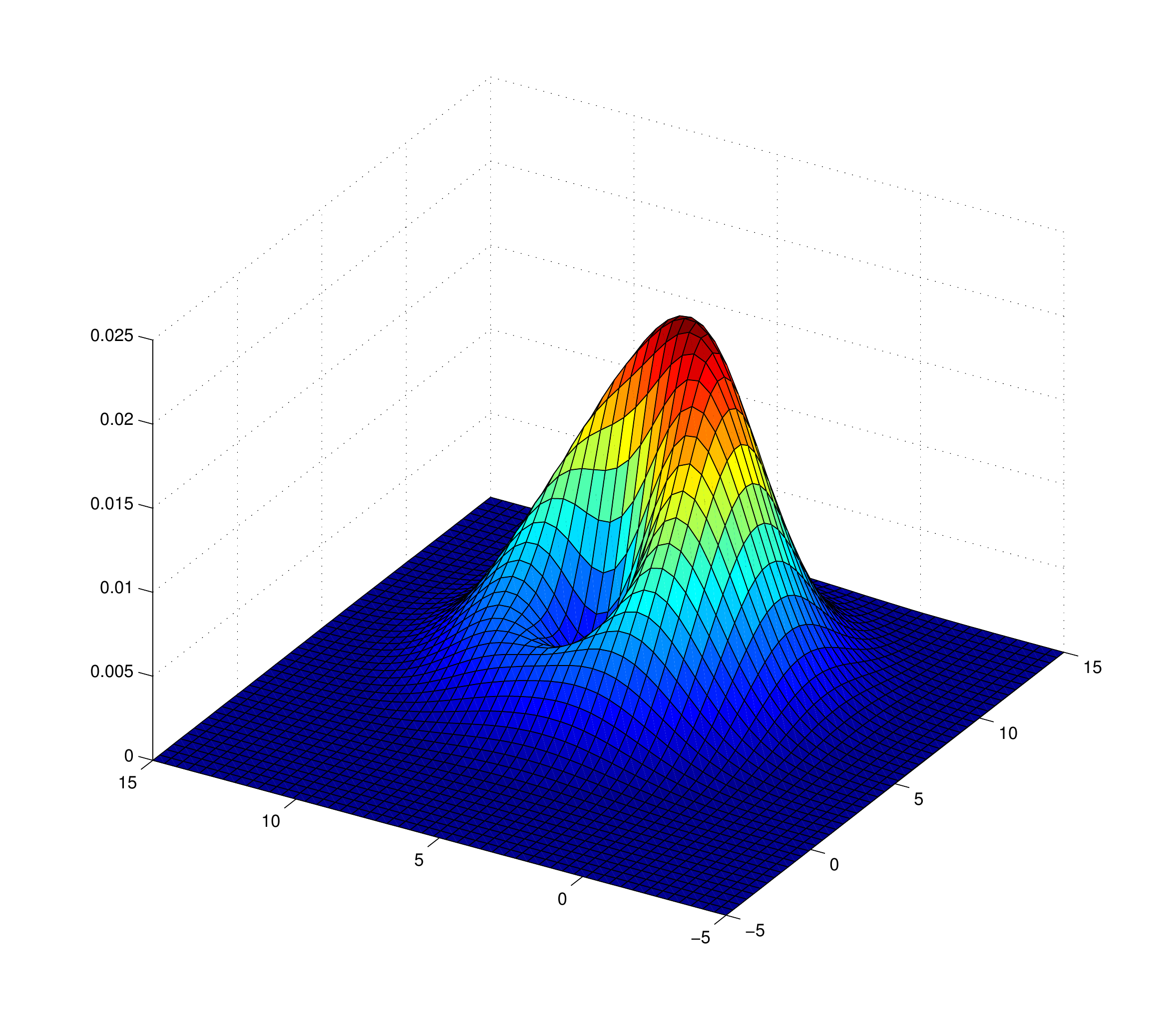}
\includegraphics[trim = 1cm 0cm 4cm 0cm, scale=0.4]{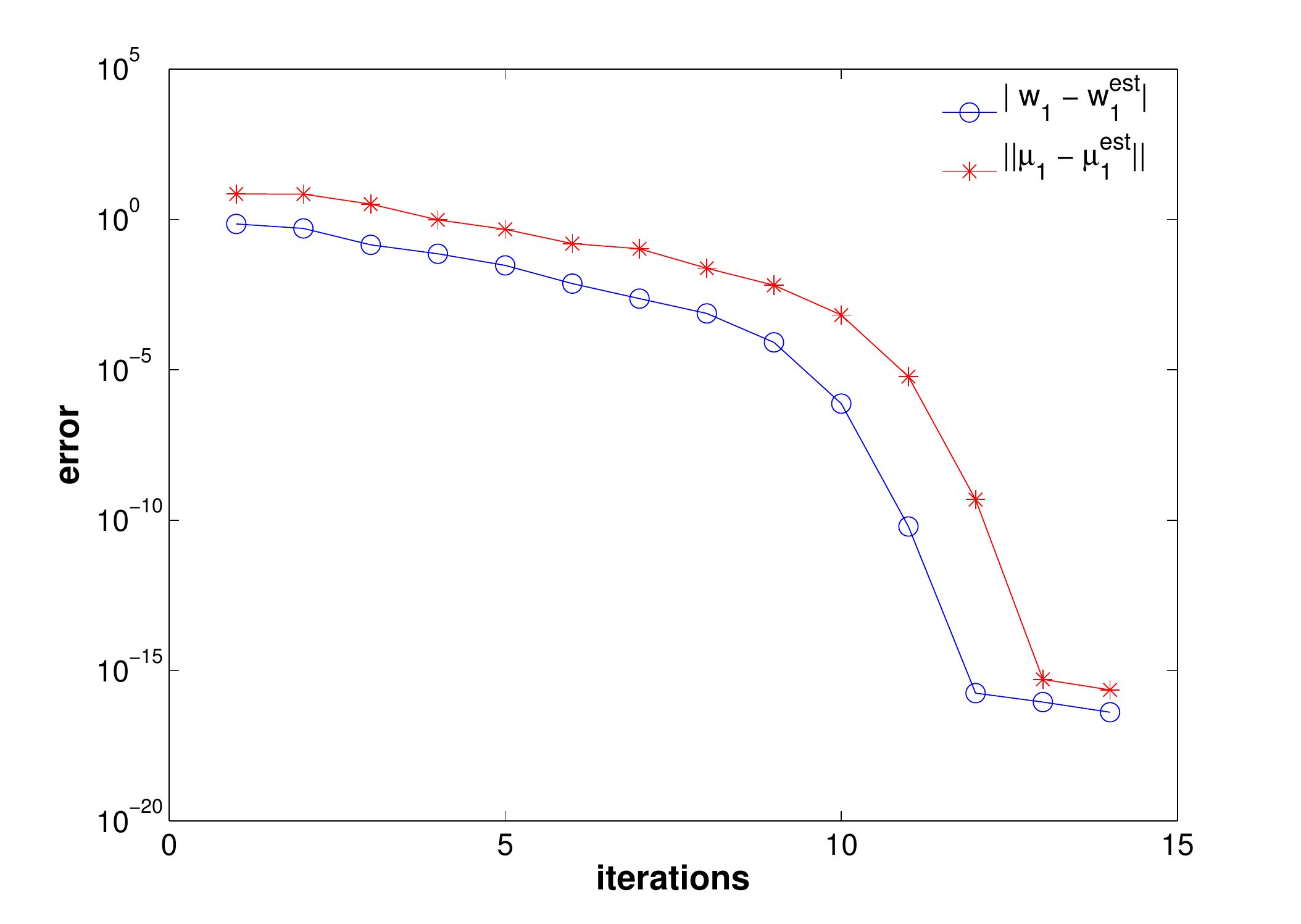}
\caption{(left) Density function of a negative mixture of spherical Gaussians with parameters $w_1 = 1.5$, $\vs{\mu}_1 = (11.4\ -3.4)^\top$, $\sigma^2_1 = 8$, $w_2 = -0.5$, $\vs{\mu}_2 =  (11.9\ - 1.9)^\top$ and $\sigma^2_2 = 4$. \newline(right) Convergence rate of the proposed method on the exact tensors $\v{M}_2$ and $\e{M}_3$.}
\label{fig_pdf_cvg}
\end{figure}

\begin{figure}[H]
\centering
\begin{minipage}{.5\textwidth}
  \centering
  \begin{algorithm}[H]
  \caption{Negative Mixture Estimation}
  \label{alg_negMix}
\begin{algorithmic}
   \STATE {\bfseries Input:} $k\in \N$, $\widehat{\v{M}}_2 \in \bigotimes^2\R^n$, $\e{\widehat{M}}_3 \in \bigotimes^3\R^n$
   \STATE{\bfseries Output:} $w_1,\cdots,w_k, \vs{\mu}_1,\cdots,\vs{\mu}_k$
   \STATE $\v{UDU}^\top \leftarrow \widehat{\v{M}}_2 $ ($k$-truncated eig. decomp.);
   \STATE $\v{W} \leftarrow \v{UD}^{-\frac{1}{2}}$; $\e{T} \leftarrow \e{\widehat{M}}_3 (\v{W},\v{W},\v{W})$; 
   \FOR{$i=1$ \TO $k$}
   \STATE Draw $\vs{\theta}$ at random in $\C^k$;
   \REPEAT
   \STATE $\vs{\theta} \leftarrow \e{T}(I,\vs{\theta},\vs{\theta})$; $\vs{\theta} \leftarrow \frac{\vs{\theta}}{(\vs{\theta}^\top\vs{\theta})^\frac{1}{2}}$;
   \UNTIL{stabilization}
   \STATE $\lambda\leftarrow \e{T}(\vs{\theta},\vs{\theta},\vs{\theta})$; $\e{T} \leftarrow \e{T} - \lambda . \vs{\theta}^{\otimes 3}$;
   \STATE $ w_i \leftarrow 1 / \lambda^2$; $\vs{\mu}_i \leftarrow \lambda (\v{W}^\top)^+ \vs{\theta} $;
   \ENDFOR
\end{algorithmic}
\end{algorithm}
\end{minipage}%
\begin{minipage}{.5\textwidth}
  \centering
\includegraphics[trim = 0cm 0.5cm 0cm 0cm, scale=0.45]{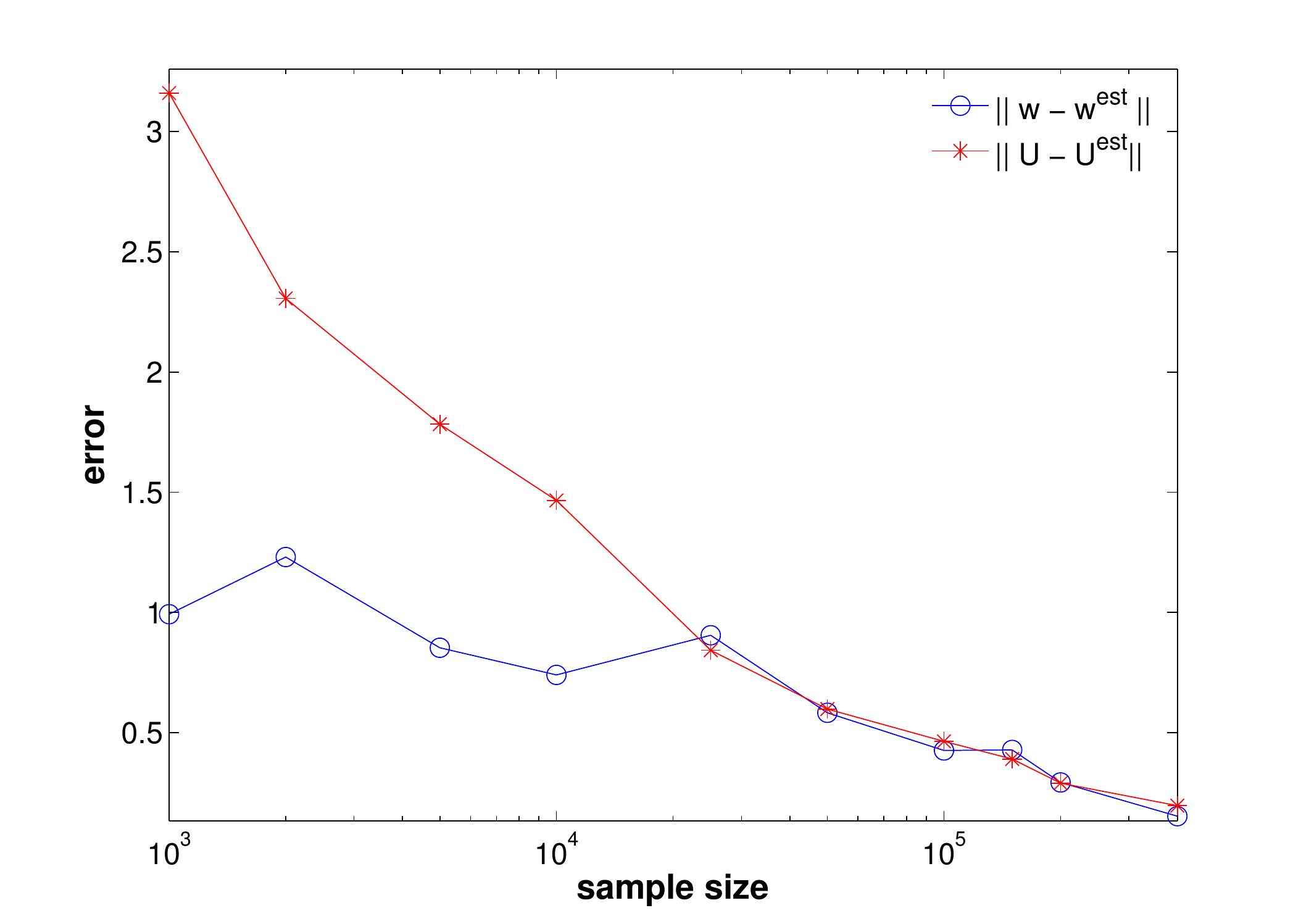}
  \label{fig:test2}
\end{minipage}
\caption{(left) Algorithm for the estimation of the parameters of a negative mixture model from estimation of the low-order moment tensors.\newline(right) Estimation error as a function of the dataset size.}\label{fig_algo}
\end{figure}


\section{Conclusion}

In this paper, we propose a first introductive study of negative mixture models. We argue that these models may appear naturally in several learning settings --- such as spectral learning of probability distributions on strings --- when the learning schemes rely on algebraic methods applied without positivity constraints (i.e. on fields, e.g. $\R$, rather than semi-fields, e.g. $\R_+$).

These models may seem difficult to handle, since allowing negative weights exclude the use of probabilistic methods such as EM. However, tensor decomposition techniques can be an appealing alternative. The complex tensor power method we propose, along with its application to the negative Gaussian mixture model, is a first step toward a deep understanding of these models and the elaboration of tools to use them. 

This work could be extended in several ways. First, other fields of machine learning where negative mixture models appear, or where their expressiveness can be useful, should be investigated. By extending the power method to complex valued tensors, we are able to propose an algorithm to estimate the parameters of such models, but the implications of using decomposition techniques on complex tensors need to be studied further. In particular, a deep robustness analysis of our method would help to understand its behavior in the learning setting. 

\comment{
In this paper, we showed that negative mixtures occur naturally in spectral learning and we proposed a method to tackle the estimation of such models. This work could be extended in several ways. First,  beyond the class of rational series, other fields of machine learning where negative mixture models occur, or where their expressiveness would be useful, could be found. Then, by extending the power method to complex valued tensors we were able to solve the problem at hand, but new ones appeared. For example, how then convergence of our method to a decomposition of a real tensor with complex valued vectors and weights should be handled ? Can we find the closest tensor decomposition over $\R$ to a decomposition over $\C$ ? Moreover, complex tensors appear as a byproduct of the whitening process, and the tensors in $\C^3$ obtained after whitening only lie in the subspace $(\R\cup i\R)^n$. This crucial information may be used to improve the robustness of our method. Finally, a deep robustness analysis of our method could help to understand its behavior in the learning setting. 
}
\newpage
\bibliography{icml2014}
\nocite{*}

\acks{This work has been carried out thanks to the support of the ARCHIMEDE Labex (ANR-11-LABX- 0033) and the A*MIDEX project (ANR-11-IDEX-0001-02) funded by the "Investissements d'Avenir" French government program managed by the ANR.}


\appendix

\section{Proofs and Complements}
\subsection{Rational probability distributions on strings}\label{supmat:rat}

Probabilistic automatas and HMMs define the same family of probability distributions on strings~\cite{DBLP:journals/pr/DupontDE05}. All these distributions are rational but the converse is false~\cite{MR0152021,DBLP:journals/fuin/DenisE08}. The simplest counter examples can be built on a one-letter alphabet and a dimension equal to 3.

Let $\Sigma=\{a\}$ be a one-letter alphabet. Let us define
a parametrized family of linear representations by 
$$\vs{\iota}=(\lambda,0,\sqrt{2}\lambda)^\top, M=\rho\left(
  \begin{array}{ccc}
    \cos{\alpha}&-\sin{\alpha}&0\\
\sin{\alpha}&\cos{\alpha}&0\\
0 &0 &1
  \end{array}
\right), \vs{\tau}=(1,1,1)^\top$$
where $\lambda>0$ and $0<\rho<1$. Let $r$ be the associated rational series. It can easily be seen
that $$r(a^n)=\rho^n\sqrt{2}
\lambda[\cos{(n\alpha-\pi/4)}+1)]\geq 0$$ for all $n$. We have
also $$r(\Sigma^*)=\lambda\left[\frac{1-\sqrt{2}\rho\cos{(\alpha-\pi/4)}}{1+\rho^2-2\rho\cos{\alpha}}+\frac{\sqrt{2}}{1-\rho}\right]$$
and $\lambda$ can always be chosen such that $r(\Sigma^*)=1$,
i.e. such that $r$ is a probability distribution. It can easily be seen that $r$ can be defined by a PA iff $\alpha/\pi\in \Q$.  

For example, if $\cos{\alpha}=3/5, \sin{\alpha}=4/5$, the corresponding distributions cannot be computed by a PA. 

If $\rho=0.5$, we have $\lambda=\frac{13}{6+26\sqrt{2}}$ and $\vs{\iota}\simeq(0.304,0,0.430)^\top$. The construction described in Section~\ref{sec:rat} yields the distributions $p^+$ and $p^-$ respectively defined by the following PAs: 
\small
$$\vs{\iota}^+=(0.4015,0,0.5985,0,0)^\top, \v{M}^+=\left(
  \begin{array}{ccccc}
0.300&0       &  0    &     0 &   0.173\\
    0.302   & 0.3   &      0      &   0   &      0\\
         0      &   0  &  0.5      &   0   &      0\\
         0    &0.7    &     0  &  0.3   &      0\\
         0     &    0   &      0   & 0.7 &   0.300\\
  \end{array}
\right), \vs{\tau}^+=(0.527,0.398,0.5,0,0)^\top,$$
$$\vs{\iota}^-=(0,0,1,0)^\top, \v{M}^-=\left(
  \begin{array}{cccc}
0.300      &   0   &      0 &   0.173\\
    0.302   & 0.3   &      0 &        0\\
         0   & 0.7  &  0.3   &      0\\
         0   &      0  &  0.7   & 0.300\\
  \end{array}
\right), \vs{\tau}^-=(0.527,0.398,0,0)^\top,$$
\normalsize
and the mixture parameters $s^+=1.4364$ and $s^-=-0.4364$. 

If $\rho=0.75$, the series $r^+$ and $r^-$ computed by Lemma~\ref{lem:difplus} do not converge. It is necessary to compute first a linear representation $(\vs{\iota}, \v{M}, \vs{\tau})$ of $r$ such that the series associated with  $(|\vs{\iota}|, |\v{M}|, |\vs{\tau}|)$ is convergent. This can be achieved using techniques described in~\cite{DBLP:journals/iandc/BaillyD11}. For example, we obtain the following linear representation 
\small
$$\vs{\iota}=(1,0,0,0,0,0)^\top, M=\left(
  \begin{array}{cccccc}
       0   & 0.5675   &      0   &      0      &   0  &       0\\
         0    &     0  &  0.7125    &     0     &    0   &      0\\
         0     &   0      &   0  &  0.9566    &     0      &   0\\
         0  &       0    &     0    &    0  &  0.9753   &      0\\
         0    &     0    &     0   &      0  &       0  &  0.8334\\
    0.5662 &  -0.1571  &      0       &  0      &   0  &  0.2750\\

  \end{array}
\right),$$
\normalsize 
and $\vs{\tau}=(0.4325
    0.2875,0.0434,0.0247, 0.1666, 0.3159)^\top$
from which the construction described in Section~\ref{sec:rat} can be applied. 

\subsection{Proof of Lemma~\ref{lemma_alpha}}
\label{supmat:mainThm}
\begin{lemma} 
\label{lemma_alpha}
Let $k>0$ and $z\in \C$ such that
$|z|<1/2$. Then, $$|(1+z)^{-k}-1|\leq 2|z|(2^k-1).$$
In particular, $$|(1+z)^{-1/2}-1|\leq |z|\textrm{ and }|(1+z)^{-3/2}-1|\leq 4|z|.$$
\end{lemma}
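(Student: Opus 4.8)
The plan is to reduce the complex estimate to a one-dimensional real inequality via the generalized binomial series and then exploit convexity. First I would write, for $|z|<1$ and with $(1+z)^{-k}$ understood as the principal branch,
\[
(1+z)^{-k}-1=\sum_{n=1}^{\infty}\binom{-k}{n}z^n,
\]
which is valid throughout the disk $|z|<1/2$. Taking moduli and applying the triangle inequality gives $|(1+z)^{-k}-1|\leq\sum_{n\geq 1}\left|\binom{-k}{n}\right|\,|z|^n$. Since $k>0$, every factor in $\binom{-k}{n}=\frac{(-k)(-k-1)\cdots(-k-n+1)}{n!}$ is negative, so its modulus is $\frac{k(k+1)\cdots(k+n-1)}{n!}=\binom{k+n-1}{n}$, the generalized binomial coefficient appearing in the negative binomial series.

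This identifies the bound with a real power series: writing $t=|z|$, the negative binomial theorem $\sum_{n\geq 0}\binom{k+n-1}{n}t^n=(1-t)^{-k}$ yields
\[
|(1+z)^{-k}-1|\leq (1-t)^{-k}-1\qquad (0\leq t=|z|<1).
\]
It then remains to prove the purely real inequality $(1-t)^{-k}-1\leq 2(2^k-1)t$ for $t\in[0,1/2]$. Here is the crux of the argument, and the one place where I expect a little care to be needed: the left-hand side $g(t)=(1-t)^{-k}-1$ is convex on $[0,1)$, since its second derivative $k(k+1)(1-t)^{-k-2}$ is positive for $k>0$; moreover $g(0)=0$ and $g(1/2)=2^k-1$. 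The right-hand side is precisely the chord of $g$ joining $(0,0)$ and $(1/2,2^k-1)$, i.e.\ the line through these two points, whose slope is $2(2^k-1)$. Because a convex function lies below its chords, $g(t)\leq 2(2^k-1)t$ on $[0,1/2]$, which is exactly the claimed bound.

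Finally, the two special cases follow by substituting $k=1/2$ and $k=3/2$ and bounding the resulting constants: $2(2^{1/2}-1)=2(\sqrt2-1)\leq 1$ gives $|(1+z)^{-1/2}-1|\leq|z|$, and $2(2^{3/2}-1)=2(2\sqrt2-1)\leq 4$ gives $|(1+z)^{-3/2}-1|\leq 4|z|$. The main (mild) obstacle is the convexity/chord step; before it, one should merely confirm that the branch of $(1+z)^{-k}$ used in the application of Theorem~\ref{mainThm} agrees with the power-series definition, but since $|z|<1/2<1$ we stay well inside the disk of convergence and the principal branch is the natural choice, so no branch ambiguity arises.
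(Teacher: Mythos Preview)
Your proof is correct. Both your argument and the paper's land on the same intermediate inequality $|(1+z)^{-k}-1|\le(1-|z|)^{-k}-1$ and then finish with the identical convexity/chord step on $[0,1/2]$; the only difference is how that intermediate bound is reached. You expand $(1+z)^{-k}$ in its binomial series and bound term by term via the negative binomial identity, whereas the paper writes $(1+z)^{-k}-1=-kz\int_0^1(1+tz)^{-(k+1)}\,dt$ (fundamental theorem of calculus along the segment from $0$ to $z$) and bounds the integrand pointwise by $(1-t|z|)^{-(k+1)}$. Both routes are equally short and yield the same constant; yours is arguably a touch more elementary since it sidesteps any path-integral language, while the paper's version makes the mean-value flavour of the estimate more transparent.
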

\begin{proof}
Let $f(z)=(1+z)^{-k}-1$ with $k>0$ and $|z|<1/2$;
$f'(z)=-k(1+z)^{-(k+1)}$. Let $\gamma:[0,1]\mapsto \C$
s.t. $\gamma(t)=tz$. 
We have
\begin{align*}
  (1+z)^{-k}-1&=\int_{\gamma}f'(y)dy=\int_0^1f'(\gamma(t))\gamma'(t)dt\\
&=-kz\int_0^1(1+tz)^{-(k+1)}dt.
\end{align*}
Therefore, 
\begin{align*}
  |(1+z)^{-k}-1|&\leq k|z|\int_0^1(1-t|z|)^{-(k+1)}dt\\
&=[(1-t|z|)^{-k}]_0^1=(1-|z|)^{-k}-1\\
&\leq 2|z|(2^k-1).
\end{align*}
Indeed, let $g(x)=(1-x)^{-k}-1-2x(2^k-1)$. It can be checked that
$g(0)=g(1/2)=0$ and that $g$ is convexe on $[0,1/2]$. 
    
\end{proof}

\subsection{Proof of Theorem~\ref{NegGaussMixThm}}\label{supmat:learning}

We will need the following results. The first one is a corollary of the \emph{Sylvester's Law of Inertia}. 

\begin{lemma}\label{lem:sylvester}
Let $\v{Q}\in\R^{n\times n}$ be a symmetric real matrix. Suppose that there exists a non singular matrix $\v{P}\in\R^{n\times n}$ and $w_1, \ldots w_n\in \R$ such that $\v{Q} = \v{P}^\top \v{D} \v{P}$ where $\v{D}=diag(w_1, \ldots, w_n)$, the diagonal matrix whose diagonal entries are $w_1, \ldots, w_n$. Then, the number of negative eigenvalues of $\v{Q}$ is equal to the number of negative coefficients $w_i$. 
\end{lemma}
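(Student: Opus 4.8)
The plan is to recognize the hypothesis $\v{Q} = \v{P}^\top \v{D} \v{P}$ with $\v{P}$ nonsingular as a congruence relation between the symmetric matrices $\v{Q}$ and $\v{D}$, and then to use the invariance of the number of negative eigenvalues (the negative index of inertia) under congruence. Since $\v{D}$ is diagonal, its eigenvalues are exactly the $w_i$, so its number of negative eigenvalues is precisely the number of negative $w_i$; the whole content is therefore to show that $\v{Q}$ and $\v{D}$ have the same number of negative eigenvalues, which is the statement of Sylvester's Law of Inertia specialized to this situation.

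Rather than quoting the law as a black box, I would establish the instance we need through the basis-free characterization of the negative index of inertia. For any symmetric $\v{A}\in\R^{n\times n}$, let $n_-(\v{A})$ denote the maximal dimension of a linear subspace $V\subseteq\R^n$ on which the quadratic form $\v{x}\mapsto \v{x}^\top\v{A}\v{x}$ is negative definite. First I would check that $n_-(\v{A})$ equals the number of negative eigenvalues of $\v{A}$: diagonalizing $\v{A}$ in an orthonormal eigenbasis, the span of the eigenvectors with negative eigenvalues is a negative-definite subspace of that dimension, while on any subspace of strictly larger dimension a dimension count forces a nonzero vector lying in the span of the nonnegative eigenvectors, on which the form is nonnegative.

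The key step is then that congruence preserves $n_-$. Writing $\v{x}^\top\v{Q}\v{x} = (\v{P}\v{x})^\top\v{D}(\v{P}\v{x})$ and using that $\v{y}=\v{P}\v{x}$ is a linear bijection of $\R^n$, the map $V\mapsto \v{P}V$ is a dimension-preserving bijection between subspaces on which $\v{x}^\top\v{Q}\v{x}$ is negative definite and subspaces on which $\v{y}^\top\v{D}\v{y}$ is negative definite. Hence $n_-(\v{Q})=n_-(\v{D})$, and combining this with the previous paragraph and the observation that $n_-(\v{D})$ is the number of negative $w_i$ yields the claim.

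I do not expect a genuine obstacle here; the only point requiring care is to use strict negative definiteness (rather than mere negativity) in the definition of $n_-$, so that the zero eigenvalues of $\v{D}$ — which may occur since the $w_i$ are only assumed real — do not contribute to the count on either side.
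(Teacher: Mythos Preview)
Your proposal is correct and follows the same approach as the paper: the paper simply states that this lemma is a corollary of Sylvester's Law of Inertia and gives no further proof, whereas you identify the congruence $\v{Q}=\v{P}^\top\v{D}\v{P}$ and then supply a self-contained variational proof of the relevant instance of Sylvester's law via the negative index $n_-$. There is nothing to correct; your version is strictly more detailed than the paper's.
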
 

\begin{lemma}\emph{(Weyl's Inequality)} Let $\v{A}$ and $\v{B}$ be two $n\times n$ hermitian matrices. We have $\sigma_1(\v{A}) + \sigma_i(\v{B}) \leq \sigma_i(\v{A}+\v{B}) \leq \sigma_n(\v{A}) + \sigma_i(\v{B})$ for all $i\in [n]$, where $\sigma_i(\v{M})$ denotes the $i$-th smallest eigenvalue of $\v{M}$.
\end{lemma}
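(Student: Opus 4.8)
The plan is to derive both inequalities from the Courant--Fischer min--max characterization of the eigenvalues of a Hermitian matrix, adapted to the convention used in the statement (``$\sigma_i$ = the $i$-th smallest eigenvalue''). Recall that for an $n\times n$ Hermitian matrix $\v{M}$ with eigenvalues ordered as $\sigma_1(\v{M})\leq\cdots\leq\sigma_n(\v{M})$, this characterization reads
\begin{equation*}
\sigma_i(\v{M}) = \min_{\dim V = i}\ \max_{\v{x}\in V,\ \|\v{x}\|=1} \v{x}^*\v{M}\v{x},
\end{equation*}
where $V$ ranges over the $i$-dimensional subspaces of $\C^n$ and $\v{x}$ over the unit vectors of $V$. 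The whole argument then rests on two elementary Rayleigh-quotient bounds: for every unit vector $\v{x}$ one has $\sigma_1(\v{A})\leq \v{x}^*\v{A}\v{x}\leq\sigma_n(\v{A})$.

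For the upper bound I would fix an arbitrary $i$-dimensional subspace $V$ and a unit vector $\v{x}\in V$, and split $\v{x}^*(\v{A}+\v{B})\v{x} = \v{x}^*\v{A}\v{x} + \v{x}^*\v{B}\v{x} \leq \sigma_n(\v{A}) + \v{x}^*\v{B}\v{x}$. Maximizing over unit $\v{x}\in V$ and then minimizing over $i$-dimensional $V$, the additive constant $\sigma_n(\v{A})$ factors out of both operations (it depends neither on $\v{x}$ nor on $V$), and the remaining min--max in $\v{B}$ is exactly $\sigma_i(\v{B})$ by Courant--Fischer; this yields $\sigma_i(\v{A}+\v{B})\leq \sigma_n(\v{A})+\sigma_i(\v{B})$.

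For the lower bound I would run the same computation with the opposite Rayleigh bound $\v{x}^*\v{A}\v{x}\geq\sigma_1(\v{A})$, obtaining $\sigma_i(\v{A}+\v{B})\geq\sigma_1(\v{A})+\sigma_i(\v{B})$. Alternatively, the lower bound follows from the upper bound by a sign trick: applying the upper bound to the pair $(-\v{A},\ \v{A}+\v{B})$, whose sum is $\v{B}$, and using $\sigma_n(-\v{A})=-\sigma_1(\v{A})$ (since the eigenvalues of $-\v{A}$ are the negatives of those of $\v{A}$ in reversed order), one rearranges $\sigma_i(\v{B})\leq -\sigma_1(\v{A})+\sigma_i(\v{A}+\v{B})$ into exactly the desired inequality.

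There is no genuine obstacle here; the result is classical. The only points that require care are (i) invoking Courant--Fischer in the form matching the ``$i$-th smallest'' convention, so that the min-over-subspaces/max-over-vectors ordering is the correct one, and (ii) checking that the scalar $\sigma_n(\v{A})$ (resp. $\sigma_1(\v{A})$) really commutes past both the inner maximization and the outer minimization. If one prefers not to assume Courant--Fischer as a black box, it can itself be established by a short dimension count: any $i$-dimensional subspace $V$ meets the span of the top $n-i+1$ eigenvectors of $\v{M}$ nontrivially, and evaluating the Rayleigh quotient on that intersection forces the stated identity.
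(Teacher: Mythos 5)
Your proof is correct. Note that the paper itself offers no proof of this lemma: it states Weyl's inequality as a named classical fact and uses it as a black box in the proof of Lemma~\ref{rank1sgn}, so there is no internal argument to compare yours against. Your Courant--Fischer derivation is the standard textbook route and is sound in every step: the min--max characterization $\sigma_i(\v{M}) = \min_{\dim V = i}\max_{\v{x}\in V,\|\v{x}\|=1}\v{x}^*\v{M}\v{x}$ is stated in the form matching the paper's ``$i$-th smallest'' convention; the Rayleigh bounds $\sigma_1(\v{A})\leq\v{x}^*\v{A}\v{x}\leq\sigma_n(\v{A})$ do factor through both the inner maximization and the outer minimization, since they are constants independent of $\v{x}$ and $V$; and the sign trick for the lower bound, applying the upper bound to the pair $(-\v{A},\v{A}+\v{B})$ and using $\sigma_n(-\v{A})=-\sigma_1(\v{A})$, is a clean alternative to rerunning the computation. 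The only gloss is in your final remark: the dimension count ($\dim V + (n-i+1) > n$, so $V$ meets the span of the top $n-i+1$ eigenvectors) establishes the $\geq$ half of Courant--Fischer, while the $\leq$ half requires exhibiting the span of the bottom $i$ eigenvectors as a witness subspace; you compress both into ``forces the stated identity,'' which is fine for a classical result but worth spelling out if you intend the proof to be self-contained. One further observation: the paper applies the inequality with the rank-one summand $w_j\v{v}_j\v{v}_j^\top$ in the role of $\v{A}$, so only the degenerate cases $\sigma_1(\v{A})\leq 0$ or $\sigma_n(\v{A})\geq 0$ of your bounds are actually exercised downstream---your proof covers the general statement, which is what the lemma asserts.
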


\begin{lemma}
\label{rank1sgn}
Let $\{\v{v}_i\}_{i=1}^k$ be a linearly dependent family of vectors of $\R^n$, where $n\geq k$, such that any of its subset of size $k-1$ is linearly independent. We consider the rank $k-1$ matrix $\v{M} = \sum_{i=1}^k w_i \v{v}_i\v{v}_i^\top$ where $w_1,\cdots, w_k \not = 0$. Let $l$ be the number of negative coefficients $w_i$. 
Then the first null eigenvalue of $\v{M}$ is either the $l$-th or the $(l+1)$-th smallest one.
\end{lemma}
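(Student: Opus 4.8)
The plan is to reduce the claim to the inertia (signature) of a diagonal quadratic form restricted to a hyperplane, and then read off the position of the first zero eigenvalue. Writing $\v{V} = [\v{v}_1\ \cdots\ \v{v}_k]\in\R^{n\times k}$ and $\v{D} = \mathrm{diag}(w_1,\ldots,w_k)$, we have $\v{M} = \v{V}\v{D}\v{V}^\top$. The hypotheses (the family is dependent but every $(k-1)$-subset is independent) give $\mathrm{rank}(\v{V}) = k-1$, so $H := \mathrm{range}(\v{V}^\top) = (\ker\v{V})^\perp$ is a hyperplane of $\R^k$; concretely $H = \{\v{y}\in\R^k : \v{c}^\top\v{y} = 0\}$, where $\v{c}$ spans $\ker\v{V}$, i.e. $\sum_i c_i\v{v}_i = 0$ (and $\v{c}$ has all entries nonzero, again by the $(k-1)$-independence, which is why this hypothesis matters beyond fixing the rank).

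The key reduction is that $n_-(\v{M})$, the number of negative eigenvalues of $\v{M}$, equals the negative index of the restriction of the form $\v{y}\mapsto \v{y}^\top\v{D}\v{y} = \sum_i w_i y_i^2$ to $H$. I would prove this directly from the identity $\v{x}^\top\v{M}\v{x} = (\v{V}^\top\v{x})^\top\v{D}(\v{V}^\top\v{x})$ together with the fact that $\v{x}\mapsto\v{V}^\top\v{x}$ maps $\R^n$ onto $H$: a maximal subspace of $\R^n$ on which $\v{M}$ is negative definite maps injectively (since $\v{M}$-negativity forbids $\v{V}^\top\v{x}=0$) onto a negative-definite subspace of $(H,\v{D})$ of the same dimension, and conversely any negative-definite subspace of $(H,\v{D})$ lifts through a right inverse of $\v{V}^\top$ to one of the same dimension in $(\R^n,\v{M})$. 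This is the congruence / Sylvester step (cf. Lemma~\ref{lem:sylvester}) adapted to the rank-deficient, non-square matrix $\v{V}$, and it is the delicate point of the argument, since $\v{V}$ is not of full column rank and Lemma~\ref{lem:sylvester} cannot be applied verbatim.

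Next I count the negative index of $\v{D}|_H$. Over all of $\R^k$ the form $\sum_i w_i y_i^2$ is nondegenerate with exactly $l$ negative directions. Restricting a nondegenerate form to a codimension-$1$ subspace can lower the negative index by at most one and cannot raise it: a negative-definite subspace of dimension $l$ of $\R^k$ meets the hyperplane $H$ in dimension at least $l-1$ (so the negative index of $\v{D}|_H$ is $\geq l-1$), while any negative-definite subspace contained in $H$ is also one in $\R^k$ (so it is $\leq l$). Hence the negative index of $\v{D}|_H$ lies in $\{l-1,l\}$; the same bound follows from Cauchy interlacing applied to the orthogonal compression of $\v{D}$ onto $H$, itself a consequence of Weyl's inequality. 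Therefore $n_-(\v{M})\in\{l-1,l\}$.

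Finally, since $\v{M}$ is symmetric of rank $k-1$ with $n\geq k$, it has $n-(k-1)\geq 1$ zero eigenvalues, and its sorted spectrum consists of $n_-(\v{M})$ negative values, followed by the zeros, followed by the positive values. The first null eigenvalue therefore sits at position $n_-(\v{M})+1\in\{l,l+1\}$, which is exactly the claimed statement. Once the inertia of $\v{M}$ is identified with that of $\v{D}$ on the hyperplane $H$ in the second paragraph, every remaining step is a routine interlacing/counting argument.
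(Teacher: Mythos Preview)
Your argument is correct, but the route differs from the paper's. The paper peels off one rank-one term at a time: for each index $j$ it writes $\v{M}=w_j\v{v}_j\v{v}_j^\top+\v{M}_j$, applies Sylvester's law directly to $\v{M}_j$ (which is legitimate because $\{\v{v}_i\}_{i\neq j}$ is a genuinely independent $(k-1)$-tuple, so the change-of-basis matrix is nonsingular), and then invokes Weyl's inequality for a rank-one perturbation. Choosing $j$ with $w_j<0$ gives $n_-(\v{M})\geq l-1$; choosing $j$ with $w_j>0$ gives $n_-(\v{M})\leq l$.

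You instead keep the full factorization $\v{M}=\v{V}\v{D}\v{V}^\top$ and identify $n_-(\v{M})$ with the negative index of $\v{D}$ restricted to the hyperplane $H=\mathrm{range}(\v{V}^\top)$, then bound the latter by a codimension-one intersection argument. This is a genuine alternative: the paper never needs the rank-deficient version of Sylvester that you prove in your second paragraph, because it always arranges to apply Sylvester to a full-rank sub-sum; conversely, you never need to split off a term and invoke Weyl for the perturbation, because your hyperplane-restriction step gives both inequalities at once. The paper's approach is slightly more off-the-shelf (only textbook Sylvester and Weyl, no adaptation), while yours is more structural and makes the source of the ``$l$ or $l{-}1$'' dichotomy transparent: it is exactly the ambiguity in how a hyperplane meets a maximal negative subspace. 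One small remark: your parenthetical that every $c_i\neq 0$ is ``why this hypothesis matters beyond fixing the rank'' is not actually used anywhere in your argument; your proof only needs $\mathrm{rank}(\v{V})=k-1$ and the stated rank of $\v{M}$.
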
 

\begin{proof}
If $l=0$, then $\v{M}$ is positive semi-definite and $\sigma_1(\v{M})=0$. If $l=k$, then $\v{M}$ is negative semi-definite and $\sigma_k(\v{M})=0$.  

We suppose that $1\leq l\leq k-1$.

For $1\leq j\leq k$, let $\v{M}_j = \sum_{1\leq i\not = j \leq k}  w_i \v{v}_i\v{v}_i^\top$ and $ l_j$ be the number of negative coefficients in $\{w_i\}_{1\leq i \not = j \leq k}$. Let $V_j$ be the vector space spanned by $\{\v{v}_1, \ldots, \overline{\v{v}}_j, \ldots, \v{v}_k\}$, where the notation $\overline{\v{v}}_j $ means that $\v{v}_j$ is omitted.  Let $\vs{\nu}_k, \cdots, \vs{\nu}_n$ be a linearly independent family of vectors in $V_j^\perp$ and $\v{P}$ be the non singular $n\times n$ matrix $[\v{v}_1 \cdots, \overline{\v{v}}_j, \cdots, \v{v}_{k}, \vs{\nu}_k, \cdots, \vs{\nu}_n]^\top$. Clearly, $$\v{M}_j= \v{P}^\top diag(w_1,\cdots, \overline{w}_j, \cdots, w_{k}, 0,\cdots, 0) \v{P}$$
and therefore, from Lemma~\ref{lem:sylvester}, $l_j$ is the number of negative eigenvalues of $\v{M}_j$.

For any $j\in [k]$, we consider the decomposition $\v{M} = w_j \v{v}_j \v{v}_j^\top + \v{M}_j$, sum of two hermitian matrices. The first summand is a rank one matrix whose only non null eigenvalue has the same sign as $w_j$, and the second has $k-1$ non zero eigenvalues, among $l_j$ are negative.

Let $j$ be an index such that $w_j < 0$: from Weil's inequality $\sigma_i(\v{M}) \leq 0 + \sigma_i(\v{M}_j)$ for any $i\in[n]$. Since $\v{M}_j$ has $l_j=l-1$ negative eigenvalues, $\v{M}$ has at least $l-1$ negative eigenvalues. 

Let $j$ be an index such that $w_j > 0$: Weil's inequality gives $\sigma_i(\v{M}) \geq 0 + \sigma_i(\v{M}_j)$ for any $i\in[n]$, thus $\v{M}$ has at least $k-l-1$ positive eigenvalues, hence at most $l$ negative ones. 

Therefore, the first null eigenvalue of $\v{M}$ must be either the $l$-th or the $(l+1)$-th smallest one.
\end{proof}

Let $f(\v{x}) = \sum_{i=1}^k w_i \mathcal{N} (\v{x}; \vs{\mu}_i, \sigma^2_i \v{I})$ be the \textsc{pdf} of the random vector $\v{x}$, and let $l$ be the number of negative weights $w_i$. We can now prove Theorem~\ref{NegGaussMixThm}, along with the relation between $l$ and the position of the eigenvalue $\bar{\sigma}^2$ in the covariance matrix.

\begin{theorem*}
The average variance $\bar{\sigma}^2 = \sum_{i=1}^k w_i \sigma^2_i$ is an eigenvalue of the covariance matrix $\Esp [(\v{x} - \Esp [\v{x}]) (\v{x} - \Esp [\v{x}]) ^\top]$. Let $\v{v}$ be any unit-norm eigenvector corresponding to $\bar{\sigma}^2$. We have $\v{m}_1 = \sum_{i=1}^k w_i \sigma^2_i \vs{\mu}_i$, $\v{M}_2 = \sum_{i=1}^k w_i \vs{\mu}_i \otimes \vs{\mu}_i$, and $\e{M}_3 = \sum_{i=1}^k w_i \vs{\mu}_i \otimes \vs{\mu}_i \otimes \vs{\mu}_i$, where $\v{m}_1$, $\v{M}_2$ and $\e{M}_3$ are defined as in Theorem~\ref{GaussMixThm}.

Moreover, let $r$ be the number of negative eigenvalues of the matrix $\v{M} = \sum_{i=1}^k w_i (\vs{\mu}_i - \Esp [\v{x}]) \otimes (\vs{\mu}_i - \Esp [\v{x}]) $. Then $\bar{\sigma}^2$ is the $(r+1)$-th smallest eigenvalue of the covariance matrix.

Furthermore, $r$ is either equal to $l$ or $l+1$.
\end{theorem*}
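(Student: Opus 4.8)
The plan is to split the claim into the three moment identities, which carry over from the positive case essentially unchanged, and the spectral localization of $\bar{\sigma}^2$, which is the genuinely new ingredient.

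First I would observe that every moment appearing in Theorem~\ref{GaussMixThm} is a linear functional of the density $f = \sum_i w_i \mathcal{N}(\vs{\mu}_i,\sigma_i^2\v{I})$, so the sign of the $w_i$ never enters the bookkeeping. Using $\sum_i w_i = 1$, this yields $\Esp[\v{x}] = \sum_i w_i\vs{\mu}_i$ and $\Esp[\v{x}\otimes\v{x}] = \sum_i w_i\vs{\mu}_i\otimes\vs{\mu}_i + \bar{\sigma}^2\v{I}$, and an identical expansion of the third moment shows that the correction terms defining $\v{M}_2$ and $\e{M}_3$ reproduce exactly $\sum_i w_i\vs{\mu}_i^{\otimes 2}$ and $\sum_i w_i\vs{\mu}_i^{\otimes 3}$, word for word as in \cite{HsuKakade2}. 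The identity $\v{m}_1 = \sum_i w_i\sigma_i^2\vs{\mu}_i$ follows from the same single-Gaussian computation, provided the chosen unit eigenvector $\v{v}$ is orthogonal to every centered mean $\vs{\mu}_i - \Esp[\v{x}]$; I will obtain this orthogonality for free below.

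Next I would establish the key shift identity $\Esp[(\v{x} - \Esp[\v{x}])(\v{x} - \Esp[\v{x}])^\top] = \bar{\sigma}^2\v{I} + \v{M}$, obtained by expanding each component about its own mean and using $\sum_i w_i(\vs{\mu}_i - \Esp[\v{x}]) = \v{0}$ (itself a consequence of $\sum_i w_i = 1$). Hence the covariance is a rigid translate of $\v{M}$ by $\bar{\sigma}^2$, and its spectrum is that of $\v{M}$ shifted by $\bar{\sigma}^2$. Since the centered means span a space of dimension $k-1 < n$, the matrix $\v{M}$ is singular, so $\bar{\sigma}^2$ is an eigenvalue of the covariance, and its eigenspace is exactly $\ker \v{M} = (\mathrm{span}\{\vs{\mu}_i - \Esp[\v{x}]\})^\perp$; this supplies the orthogonality of $\v{v}$ promised above. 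Because every strictly negative eigenvalue of $\v{M}$ maps to a covariance eigenvalue strictly below $\bar{\sigma}^2$ and every positive one to an eigenvalue above, $\bar{\sigma}^2$ is precisely the $(r+1)$-th smallest eigenvalue.

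The last and hardest step is to pin $r$ against $l$. I would apply Lemma~\ref{rank1sgn} to the vectors $\v{v}_i = \vs{\mu}_i - \Esp[\v{x}]$, first checking its hypotheses: the $\v{v}_i$ satisfy the single dependence $\sum_i w_i\v{v}_i = \v{0}$ with all coefficients nonzero, and any $k-1$ of them are independent (substitute $\Esp[\v{x}] = \sum_j w_j\vs{\mu}_j$ and use the independence of the $\vs{\mu}_i$ together with $w_j\neq 0$). Lemma~\ref{rank1sgn}, which combines Sylvester's law of inertia (Lemma~\ref{lem:sylvester}) with a two-sided Weyl comparison on the splitting $\v{M} = w_j\v{v}_j\v{v}_j^\top + \v{M}_j$, then locates the first null eigenvalue of $\v{M}$ and thereby constrains $r$ to within one of $l$. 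I expect this inertia bookkeeping to be the crux: the count hinges on the fact that deleting a term with $w_j<0$ and deleting one with $w_j>0$ move the signature of the rank-$(k-1)$ remainder $\v{M}_j$ in opposite directions. A clean way to remove the residual ambiguity is to note that $\v{M}$ is congruent (by Sylvester, through a full-column-rank factor) to $\mathrm{diag}(\v{w}) - \v{w}\v{w}^\top$, which is $\mathrm{diag}(\v{w})$ minus a positive semidefinite rank-one matrix; eigenvalue monotonicity then forces at least $l$ negative eigenvalues, which together with the upper bound from the Weyl comparison yields the stated relation between $r$ and $l$.
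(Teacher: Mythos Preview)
Your argument is correct and follows the paper's own route step for step: linearity of expectation for the moment identities, the shift identity $\mathrm{Cov}(\v{x})=\bar{\sigma}^2\v{I}+\v{M}$, orthogonality of $\v{v}$ via $\ker\v{M}$, and Lemma~\ref{rank1sgn} (checked on the family $\{\vs{\mu}_i-\bar{\vs{\mu}}\}$) for the inertia count. The congruence $\v{M}=\v{U}\bigl(\mathrm{diag}(\v{w})-\v{w}\v{w}^\top\bigr)\v{U}^\top$ you append at the end is an addition not present in the paper; it is correct (the simplification uses $\sum_i w_i=1$) and, combined with the Weyl upper bound, actually pins down $r=l$ exactly, which is sharper than what the paper states or proves.
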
   

\begin{proof}
Most of the proof of this theorem for usual Gaussian mixtures in \cite{HsuKakade2} relies on the introduction of a discrete latent variable $h$: the sampling process is interpreted as first sampling $h$ with $\Prob[h=i] = w_i$, and then sampling $\v{x} = \vs{\mu}_h + \vs{z}_h$ where $\vs{z}_h$ is a multivariate Gaussian with mean $\vs{0}$ and covariance $\sigma^2_h \vs{I}$. Allowing negative weights in the mixture, we cannot use the same strategy, but it will be sufficient to note that $\Esp[g(\v{x})] = \sum_{i=1}^k w_i \Esp[g(\vs{\mu}_i + \v{z}_i)]$ for any function $g$, which is a direct consequence of the linearity of the expectation.

First, we need to identify the position of $\bar{\sigma}^2$ in the covariance matrix. Let $\bar{\vs{\vs{\mu}}} = \Esp[\v{x}] = \sum_{i=1}^k w_i \vs{\vs{\mu}_i}$. The covariance matrix of $\v{x}$ is 
\begin{equation*}
\Esp[(\v{x} - \bar{\vs{\mu}}) \otimes (\v{x} - \bar{\vs{\mu}})] = \sum_{i=1}^k w_i (\vs{\mu}_i - \bar{\vs{\mu}}) \otimes (\vs{\mu}_i - \bar{\vs{\mu}}) + \bar{\sigma}^2\v{I}.
\end{equation*}

Since the $\vs{\mu}_i$'s are linearly independent, $F = \{\vs{\mu}_i - \bar{\vs{\mu}}\}_{i=1}^k$ is a linearly dependent family of vectors of $\R^n$ such that any of its subset of size $k-1$ is linearly independent. It follows from Lemma~\ref{rank1sgn} that $0$ is either the $l$-th or $(l+1)$-th smallest eigenvalue of the matrix $\sum_{i=1}^k w_i (\vs{\mu}_i - \bar{\vs{\mu}}) \otimes (\vs{\mu}_i - \bar{\vs{\mu}})$, which implies that $\bar{\sigma}^2$ is the corresponding eigenvalue in the covariance matrix.

\comment{
 We prove that  In order to do so, we just need to show that, the result then directly follows from . 

First, since $\sum_{i=1}^k w_i (\vs{\mu}_i - \bar{\vs{\mu}}) = 0$, $F$ is linearly dependent. Now suppose that the subset $\{\vs{\mu}_i - \bar{\vs{\mu}}\}_{i=2}^k$ is linearly dependent, then there exists $\alpha_2, \cdots, \alpha_k$ not all zero such that $\sum_{i=2}^k \alpha_i (\vs{\mu}_i - \bar{\vs{\mu}}) = 0$, from which we can deduce (i) $\sum_{i=2}^k \alpha_i \vs{\mu}_i = \sum_{i=2}^k \alpha_i \sum_{j=1}^k w_j \vs{\mu}_j$.

Let $Z = \sum_{i=2}^k \alpha_i$ and note that $Z \not = 0$ (otherwise $\sum_{i=2}^k \alpha_i (\vs{\mu}_i - \bar{\vs{\mu}}) = \sum_{i=2}^k \alpha_i \vs{\mu}_i = 0$ which is in contradiction with the linear independence of $\{\vs{\vs{\mu}}_i\}_{i=1}^k$). It follows from (i) that $\vs{\mu}_1 = \frac{1}{w_1}( \sum_{i=2}^k \frac{\alpha_i}{Z}\vs{\mu}_i - \sum_{j=2}^k w_j \vs{\mu}_j) $ which is in contradiction with the linear independence of $\{\vs{\vs{\mu}}_i\}_{i=1}^k$.}

Note that the strict separation of $\bar{\sigma}^2$ from the other eigenvalues in the covariance matrix implies that every eigenvector corresponding to $\bar{\sigma}^2$ is in the null space of $\sum_{i=1}^k w_i (\vs{\mu}_i - \bar{\vs{\mu}}) \otimes (\vs{\mu}_i - \bar{\vs{\mu}})$, hence $\v{v}^\top  (\vs{\mu}_i - \bar{\vs{\mu}}) = 0$ for all $i\in [k]$.

We now express $\v{m}_1$, $\v{M}_2$ and $\e{M}_3$ in terms of the parameters $w_i$, $\sigma^2_i$ and $\vs{\mu}_i$. First,

\begin{align*}
\v{m}_1 &= \Esp[\v{x}(\v{v}^\top (\v{x} - \Esp [\v{x}]))^2]  \\
&= \sum_{i=1}^k w_i \Esp[(\vs{\mu}_i + \v{z}_i)(\v{v}^\top( \vs{\mu}_i - \bar{\vs{\mu}} + \v{z}_i))^2] \\
	&= \sum_{i=1}^k w_i \Esp[(\vs{\mu}_i + \v{z}_i)(\v{v}^\top\v{z}_i)^2]\ = \sum_{i=1}^k w_i\sigma^2_i \vs{\mu}_i.
\end{align*}

Next, since $\Esp[\v{z}_i\otimes \v{z}_i] = \sigma^2_i\v{I}$ for all $i\in [k]$, we have
\begin{align*}
\v{M}_2 &= \Esp[\v{x}\otimes\v{x}] - \bar{\sigma}^2 \v{I} \\
	&= \sum_{i=1}^k w_i \Esp[(\vs{\mu}_i + \v{z}_i) \otimes (\vs{\mu}_i + \v{z}_i) ] - \bar{\sigma}^2 \v{I}\\
	&= \sum_{i=1}^k w_i (\vs{\mu}_i \otimes \vs{\mu}_i + \Esp[\v{z}_i\otimes \v{z}_i]) - \bar{\sigma}^2 \v{I}= \sum_{i=1}^k w_i \vs{\mu}_i \otimes \vs{\mu}_i.
\end{align*}

Finally, writing $z_{ij}$ for the $j$-th component of the vector $\v{z}_i$, we have 

\begin{align*}
\sum_{i=1}^k w_i \Esp[\vs{\mu}_i \otimes \v{z}_i \otimes \v{z}_i] &= \sum_{i=1}^k w_i \sum_{p=1}^n\sum_{q=1}^n \Esp[z_{ip}z_{iq}] \vs{\mu}_i \otimes \v{e}_p \otimes \v{e}_q \\
	&= \sum_{i=1}^k w_i \sigma^2_i \sum_{j=1}^n \vs{\mu}_i \otimes \v{e}_j \otimes \v{e}_j \\
	&= \sum_{j=1}^n \v{m}_1 \otimes \v{e}_j \otimes \v{e}_j,
\end{align*}
where we used the fact that $\Esp[z_{ip}z_{iq}] = \delta_{pq}\sigma^2_i$ for all $i \in [k]$, $p,q\in [n]$. Using the same derivation, we have $\sum_{i=1}^k w_i  \Esp[\v{z}_i \otimes \vs{\mu}_i \otimes  \v{z}_i] = \sum_{j=1}^n  \v{e}_j \otimes \v{m}_1 \otimes \v{e}_j$ and $\sum_{i=1}^k w_i \Esp[ \v{z}_i \otimes  \v{z}_i \otimes \vs{\mu}_i ] = \sum_{j=1}^n  \v{e}_j \otimes \v{e}_j \otimes \v{m}_1 $. Hence,

\begin{align*}
\Esp[\v{x}^{\otimes 3}] &= \sum_{i=1}^k w_i \left(  
\vs{\mu}_i^{\otimes 3} + \Esp[\vs{\mu}_i\otimes \v{z}_i \otimes \v{z}_i] + 
\Esp[\v{z}_i \otimes \vs{\mu}_i\otimes \v{z}_i] + \Esp[\v{z}_i \otimes \v{z}_i \otimes \vs{\mu}_i] 
\right) \\
	&= \sum_{i=1}^k w_i \vs{\mu}_i^{\otimes 3} + \sum_{j=1}^n \left( 
\v{m}_1\otimes \v{e}_j \otimes \v{e}_j + 
\v{e}_j \otimes \v{m}_1\otimes \v{e}_j +  
\v{e}_j \otimes \v{e}_j \otimes \v{m}_1
\right)
\end{align*}
and $\e{M}_3 = \sum_{i=1}^k w_i \vs{\mu}_i \otimes  \vs{\mu}_i \otimes  \vs{\mu}_i$.

\end{proof}

\end{document}